\theoremstyle{thmstyleone}%
\newtheorem{theorem}{Theorem}
\newtheorem{lemma}[theorem]{Lemma}%
\newtheorem{corollary}[theorem]{Corollary}%
\theoremstyle{thmstyletwo}%
\newtheorem{remark}{Remark}%
\newtheorem{problem}{Problem}%
\theoremstyle{thmstylethree}%
\algrenewcommand\algorithmicrequire{\textbf{Input:}} 
\algrenewcommand\algorithmicensure{\textbf{Output:}}
\begin{document}

\title[Accurate decision trees with complexity guarantees]%
{\revise{Regularized} impurity reduction: Accurate decision trees with complexity guarantees}


\author*[1]{\fnm{Guangyi} \sur{Zhang}}\email{guaz@kth.se}

\author[1]{\fnm{Aristides} \sur{Gionis}}\email{argioni@kth.se}


\affil[1]{\orgdiv{Computer Science}, \orgname{KTH Royal Institute of Technology}, \orgaddress{\city{Stockholm}, \country{Sweden}}}




\abstract{
Decision trees are popular classification models,
providing high accuracy and intuitive explanations.
However, as the tree size grows the model interpretability deteriorates.
Traditional tree-induction algorithms, such as C4.5 and CART, 
rely on impurity-reduction functions that promote the discriminative power of each split.
Thus, although these traditional methods are accurate in practice, there has been no theoretical guarantee that they will produce small trees.
In this paper, we justify the use of a general family of impurity functions, 
including the popular functions of entropy and Gini-index,
in scenarios where small trees are desirable,
by showing that a simple enhancement can equip them with complexity guarantees.
We consider a general setting, 
where objects to be classified are drawn from an arbitrary probability distribution, 
classification can be binary or multi-class, 
and splitting tests are associated with non-uniform costs.
As a measure of tree complexity, we adopt the expected cost to classify an object drawn from the input distribution, 
which, in the uniform-cost case, is the expected number of tests. 
We propose a tree-induction algorithm that gives a logarithmic approximation guarantee on the tree complexity.
This approximation factor is tight up to a constant factor under mild assumptions.
The algorithm recursively selects a test that maximizes a greedy criterion 
defined as a weighted sum of three components.
The first two components encourage the selection of tests 
that improve the balance and the cost-efficiency of the tree, respectively, 
while the third impurity-reduction component encourages the selection of more discriminative tests.
As shown in our empirical evaluation, compared to the original heuristics,
\revise{the enhanced algorithms strike an excellent balance between predictive accuracy and tree complexity.}
}

\keywords{Decision trees, Impurity functions, Submodularity, Tree complexity, Approximation algorithms}



\maketitle

\section{Introduction}

Decision trees are 
known to provide a good trade off between accuracy and interpretability. 
However, when their size grows, decision trees become harder to interpret,
preventing their deployment in safety-critical applications and in domains
where model transparency is highly valued, such as disease diagnosis.
As interpretability still remains an ill-defined notion~\citep{lipton2018mythos},  
in this paper we consider tree complexity, a commonly-accepted proxy, to quantify interpretability \citep{freitas2014comprehensible,doshi2017towards}.
In addition, low tree-complexity promotes cheaper and faster evaluation.
Note that post-pruning techniques, such as the standard minimal cost-complexity pruning \citep{breiman1984classification}, are heuristics performed mainly to avoid overfitting.
Therefore, in order to produce interpretable trees, 
we aim for an \revise{integrated} tree-induction algorithm that considers both the accuracy and complexity of the inferred~trees.

\begin{figure}[t] 
\centering
\subcaptionbox{Standard C4.5 (AUC ROC 0.779 and expected height 5.9)}{
   	\picinput[width = 0.9\textwidth]{viz_speed-dating_C45}
	\label{fig:demo:a}}
\newline
\subcaptionbox{Enhanced C4.5 (in this paper) (AUC ROC 0.790 and expected height 4.9)}{
   	\picinput[width = 0.9\textwidth]{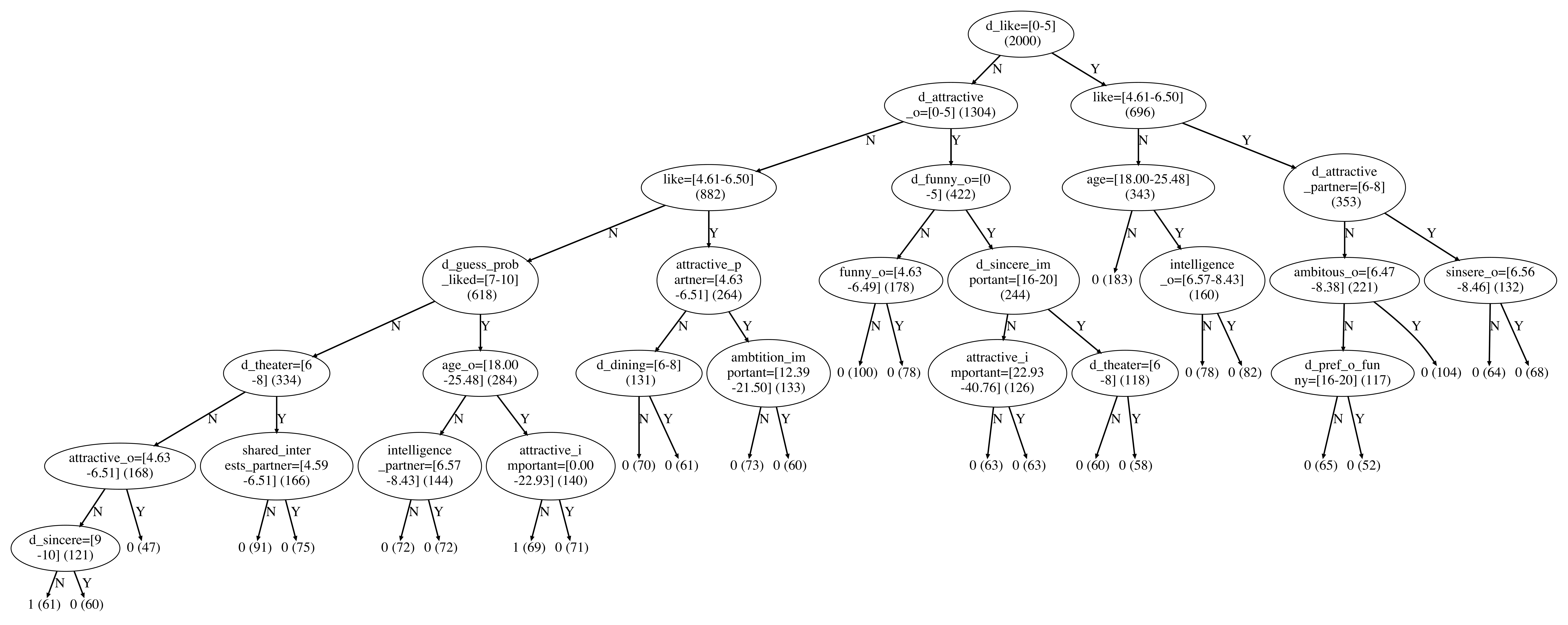}
	\label{fig:demo:b}}
\caption{Decision trees for predicting if participants would like to see their date again after speed dating.
Each internal node includes the test used and the number of participants in parentheses.
Leaf nodes make a decision.}
\label{fig:demo}
\end{figure}

More concretely, given a set of labeled objects (examples)
drawn from an arbitrary probability distribution, 
our goal is to learn a decision tree that outputs the correct class of a given input object.
Each internal tree node is equipped with a single test,
e.g., a projection split along a feature, and each test is associated with a non-uniform cost, 
i.e., the cost of evaluating the outcome of the test.
For example, an input object may represent a person, a test may correspond to a blood sugar test, and one possible outcome can be ``high.''
Our aim is to learn trees that are accurate and have low complexity.
The latter complexity objective is measured by the expected cost to classify an object drawn from the input distribution;
if all tests incur the same cost, 
this measure is simply the expected number of tests to classify an object.
This complexity measure reflects \revise{a form of ``local''} interpretability:
the more tests are involved in an if-then rule for a given object, the more obscure the rule becomes to a user~\citep{freitas2014comprehensible}.
Figure \ref{fig:demo} helps demonstrating this intuition by juxtaposing two decision trees with different complexity.
Note that non-uniform test costs may arise in different real-world scenarios; 
for example, in a medical-diagnosis application
some tests can be significantly more expensive than~others.

The problem of minimizing the expected cost of a tree for perfect class identification has been extensively studied.
Typically, the assumption of \emph{realizability} (or consistency) is being made, 
which states that for every two distinct objects there exists at least one test that can distinguish them.
Thus, one can always expand the tree until it classifies every object in the training data perfectly.
Then, the goal is to find the tree with the minimum expected cost 
that classifies each object perfectly.
Note that, in practice, the realizability assumption can be easily fulfilled by data preprocessing, 
as we demonstrate later in our experiments.
When each object belongs to a distinct class, the problem is referred to as \emph{entity identification} (\entid) \citep{gupta2017approximation}.
Without this restriction, the problem is called \emph{group identification} (\clsid) \citep{cicalese2014diagnosis}.
A further generalization that is called \emph{adaptive submodular ranking} (\asr) \citep{navidi2020adaptive}
characterizes the tree-building process as interaction among multiple submodular functions, one for each object, 
and achieves logarithmic approximation by a greedy algorithm.
The above-mentioned works \citep{gupta2017approximation,cicalese2014diagnosis,navidi2020adaptive} 
consider the problem of building trees for the purposes of exact identification. 
They do not consider issues of accuracy and overfitting. 
In fact, exact identification on a set of (training) data leads precisely to overfitting.

The algorithm proposed for the \asr problem by \citet{navidi2020adaptive}
provides a very elegant solution for the identification task it has been designed for.
However, in practice, it is not suitable for classification tasks in the context of statistical learning, 
because the chosen tests are geared towards small expected cost and are not necessarily discriminative.
Discriminative power is generally measured by the homogeneity of the target variable within a tree node, and is essential for the generalization of model performance over unseen data.
Their method selects splits that minimize the number of \emph{heterogeneous pairs} (also known as impure pairs) of objects~\citep{golovin2010near,cicalese2014diagnosis}.
In Section \ref{section:supplementary:asr-splits} 
we provide a simple example where the criterion favors a non-discriminative (presumably random) test 
over a discriminative one.
While random tests lead to a balanced tree with bounded expected depth,  
they are not ``learning'', that is, no statistical dependence is captured between tests and the target variable.

On the other hand,
traditional decision-tree methods,  
such as \cart~\citep{breiman1984classification} and \cfourfive~\citep{quinlan1993c4.5}, 
rely on time-tested impurity-reduction heuristics
that yield decision trees with high discriminative power.
Although trees produced by these popular methods are accurate in practice, 
there has been no guarantee on the size, or depth, of the resulting trees.
Actually, despite the popularity of these methods, their theoretical properties remain still 
poorly understood~\citep{bellala2012group,brutzkus2019on,blanc2020top}.

In this paper we propose a general family of methods that achieve the best of both worlds: 
\emph{it produces decision trees having both high accuracy and bounded depth}. 
Our key discovery is that the \asr framework can be extended 
to effectively analyze a broad range of impurity functions for tree induction.

More formally, we introduce the \emph{non-overfitting group identification} (\nci) problem,
which is a natural generalization of group identification (\clsid),
where we further allow early termination during tree expansion to avoid overfitting. 
We propose a novel greedy algorithm that takes into consideration the impurity reduction
and maintains the strong approximation guarantee on the complexity of the resulting tree.
Specifically, our greedy algorithm admits the use of a \emph{general family of decomposable impurity functions}, 
which is defined to be in the form of a weighted sum over impurity scores in each class.
This family includes the popular functions of entropy and Gini-index.
Therefore, our approach generalizes many traditional tree-induction algorithms
such as \cart and \cfourfive into a complexity-aware method.

\smallskip
In concrete, in this paper we make the following contributions.
\begin{itemize}
	\item 
	We extend the adaptive submodular ranking (\asr) framework of \citet{navidi2020adaptive}
	and we propose a novel greedy algorithm to select discriminative tests for 
	the {non-overfitting group identification} (\nci) problem.
	Our algorithm offers an asymptotically tight approximation guarantee on the complexity of the inferred~tree under mild assumptions.
	\item We define a general family of decomposable impurity functions, 
	which can be used by our algorithm as a surrogate for discriminative power.
	As a result, our algorithm generalizes traditional tree-induction algorithms, 
	such as \cart and \cfourfive, into complexity-aware methods.
	\item We provide a comprehensive experimental evaluation
	in which we show that \revise{the enhanced \cfourfive and \cart strike an excellent balance between predictive accuracy and tree complexity},
	compared to their corresponding original heuristics.
	Furthermore, the \asr formulation yields inferior predictive accuracy, compared to other learning methods.
	Our implementation is publicly available.\footnote{\url{https://github.com/Guangyi-Zhang/low-expected-cost-decision-trees}}
\end{itemize}

The rest of the paper is organized as follows.
The related work is discussed in Section \ref{section:related}.
The necessary notation and the formal definition of the \nci problem 
are introduced in Section \ref{section:definition}.
The main algorithm and its theoretical analysis follow in Sections~\ref{section:algorithm} 
and~\ref{section:analysis}, respectively.
Empirical experiments are conducted in Section \ref{section:experiments}, 
and we conclude in Section \ref{section:conclusion}.

\section{Related work}
\label{section:related}


\para{Decision-tree induction.}
Mainstream algorithms such as \cfourfive and \cart embrace a top-down greedy approach. 
Most of the greedy criteria proposed are essentially ad-hoc heuristics for measuring the strength of dependence between tests and the class, with no consideration for tree complexity \citep{murthy1998automatic}.
Theoretical understanding about such greedy methods is still lacking in the literature.
A lower bound on the expected tree depth for \cfourfive that depends on the shape of a given tree has been developed by Bellala et al.~\citep{bellala2012group}.
There also exist some recent results in the field of learning theory \citep{brutzkus2019on,blanc2020top}.

\spara{Tree complexity.}
Popular measures include the number of nodes in the tree, the tree height and the expected path length. 
%
\revise{
The first kind of measures are closer to a notion of ``global'' interpretability, 
in the sense that one could inspect the entire tree of a small size,
while the second kind of measures provide a notion of ``local'' interpretability, 
in the sense that one could explain any given object using a small number of tests.
Our choice in the paper, the third kind of a measure, 
combines elements from both global and local interpretability.
First, it obviously enables a form of local interpretability, i.e., a guarantee of a small expected number of tests when explaining a given object.
This choice is considered to be more natural and less strict compared to worst-case tree height, 
as it may not be possible to classify every object using a small number of tests.
Second, it also enables a form of global interpretability, as the global model knowledge is acquired by understanding the decision for every example in the dataset, 
and also it leads to smaller trees in general.}
Unfortunately, these complexity measures are proven to lead to \np-hard tasks 
\citep{hancock1996lower,laurent1976constructing}. 
In particular, the expected path-length measure with an arbitrary probability distribution over objects does not admit sub-logarithmic approximation \citep{chakaravarthy2007decision}.

\spara{Identification.}
The entity identification (\entid) problem has been investigated in different contexts, 
including optimal decision trees, 
disease/fault diagnosis, and active learning~\citep{adler2008approximating,chakaravarthy2007decision,dasgupta2005analysis,garey1972optimal,guillory2009average,gupta2017approximation,kosaraju1999optimal}.
A class-based generalization, the group identification (\clsid) problem, 
where objects are partitioned into groups (classes), 
has also been studied~\citep{bellala2012group,cicalese2014diagnosis,golovin2010near}.
The state-of-the-art method achieves $\bigO(\log n)$-approximation in a general setting with an arbitrary object distribution and non-uniform multi-way testing costs~\citep{cicalese2014diagnosis}.
Our paper further generalizes the latter work by considering the discriminative power of the selected tests.
To the best of our knowledge, this is the first work
to combine identification problems and traditional tree-induction algorithms.

\spara{Stochastic submodular coverage (\stosubmcover)}
Tree induction can be seen as a sample-based stochastic 
submodular-coverage problem \citep{golovin2011adaptive,grammel2016scenario}, 
by relating a realization of items in the \stosubmcover problem to an object in identification problems.
The expected cost of a tree is then equivalent to the expected cost in item selection.

\spara{Adaptive submodular ranking (\asr)}
The \asr problem, proposed by \citet{navidi2020adaptive}, 
originates from the line of research of min-sum set cover \citep{feige2004approximating,im2012minimum}, and 
turns out to generalize the above-mentioned identification problems~\citep{bellala2012group,cicalese2014diagnosis,golovin2010near}.
Our formulation follows the framework of \asr, and extends 
its greedy criterion to incorporate an impurity-reduction component.

\section{Problem definition}\label{section:definition}

In this section, we first formalize the \emph{non-overfitting group identification} (\nci) problem, 
and then define a family of decomposable impurity functions for tree induction.

An instance of the \nci problem is specified by 
a set of objects $\objs=\{\obj_1,...,\obj_{\nobjs} \}$, 
a set of class labels $\clss=\{\cls_1,...,\cls_{\nclss} \}$, 
and a set of tests $\tests=\{\test_1,...,\test_{\ntests} \}$.
The objects in \objs are drawn from a probability distribution \prob, 
i.e., object \obj in \objs occurs with probability $\prob(\obj)$.
Each object $\obj\in\objs$ is associated with a class $\cls(\obj)$ in $\clss$.
A test $\test\in\tests$ performed on an object $\obj\in\objs$ returns a value $\test(\obj) \in \{1,...,\nval_\test\}$.
We assume that employing test \test incurs cost $\cost(\test)$. 
For simplicity and without loss of generality, 
we also assume that the cost function \cost takes integral values. 
A useful quantity in our later analysis is the minimum object probability $\minprob=\min_{\obj\in\objs} \prob(\obj)$.
Finally, we assume that a threshold parameter $\thr\in[0,1]$ is given as input,
which determines a stopping condition for the decision-tree construction, 
as we will see shortly. 

We write $\tree(\objs)$ to refer to a decision tree built 
to classify the objects in~\objs.
We omit the reference to the set \objs when it is clear from the context
and just write \tree.
We also write $\tree(\node)$ to refer to a subtree of the decision tree
to classify objects in a node \node of the tree,
where $\node\subseteq\objs$ is the subset of objects.
Each internal node \node is equipped with a test \test in \tests.
Objects in \node are partitioned by test \test into multiple subnodes according to their testing outcomes $\test(\obj)$.
Using this convention we refer to the root of the decision tree simply as \objs, 
that is, the complete set of objects to be classified by the tree.
Finally, we define $\prob(\node)=\sum_{\obj\in\node} \prob(\obj)$.

We stop splitting a node $\node\subseteq\objs$ in the tree \tree when either 
($i$) the node~\node is \emph{homogeneous}, i.e., all objects in \node belong to the same class, or 
($ii$) the probability $\prob(\node)$ is no greater than the threshold parameter~\thr, 
for instance, in the case of uniform \prob, the node \node has at most $\thr\nobjs$ objects.
As a surrogate for homogeneity, we adopt a function \pair over pairs of objects.
We define $\pair(\node)$ to be the number of \emph{heterogeneous} pairs of objects in the node \node, 
i.e., pairs of objects with distinct classes.
Note that $\pair(\node)=0$ when \node is homogeneous.

As a measure of complexity for a tree rooted at \objs, 
we adopt the measure of \emph{expected cost}, which we denote by $\coste(\tree(\objs))$.
In particular, 
we define $\cost(\tree,\obj)$  as the cost of evaluating an object \obj in \tree, 
which is the sum of costs of all tests that \obj goes through in \tree.
The expected cost of a tree \tree for a set of objects \objs is then defined as
$\coste(\tree(\objs)) = \sum_{\obj\in\objs} \prob(\obj) \, \cost(\tree,\obj)$.

We are now ready to define the \nci problem. 

\begin{problem}[Non-overfitting group identification (\nci)]
\label{problem:nci}
Given a problem instance $\inst = (\objs, \clss,\allowbreak \tests,\cls,\allowbreak \prob, \cost, \thr)$,
with set of objects \objs, 
set of class labels \clss, 
set of tests \tests,
object labels \cls,
probability distribution \prob, 
cost function \cost,
and a threshold~\thr,
find a tree $\tree(\objs)$ 
that minimizes the expected cost $\coste(\tree(\objs))$
and for all leaf nodes~\node it satisfies either $\pair(\node)=0$ or $\prob(\node)\le\thr$.
\end{problem}

The \nci problem generalizes the \clsid problem by setting $\thr=0$, and 
as stated in Section \ref{section:related}, the \clsid problem is \np-hard.
Thus, we aim to find a tree \tree that is an approximate solution, 
i.e., whose cost $\coste(\tree)$ is bounded with respect to the cost
$\coste(\opt{\tree})$ of the optimal~tree~$\opt{\tree}$.

Our approach draws inspiration from the \emph{adaptive submodular ranking} (\asr) problem \citep{navidi2020adaptive}, 
which can be defined similarly, 
by replacing each object $\obj_i$ in \objs with a non-decreasing submodular function $\objf_i: 2^\tests \to [0,1]$ such that $\objf_i(\emptyset)=0$ and $\objf_i(\tests)=1$;
recall that \tests is the set of tests, 
thus, each function $\objf_i$ takes as input a subset of tests.
We denote the set of non-decreasing submodular functions by $\objfs = \{\objf_i\mid \obj_i \in \objs \}$.
We again consider a tree, which recursively partitions~\objfs.
The tests \tests and the probability distribution \prob apply 
to the set of functions \objfs in the same way that they apply to their corresponding objects.
For example, a function $\objf_i$ evaluated on a test $\test\in\tests$ 
returns a value $\test(\objf_i)=\test(\obj_i)$, 
which determines the branch of the tree that $\objf_i$ will follow.
Given a tree \tree, a function~\objf picks up all tests associated with the nodes it goes through and is \emph{fully covered} when it reaches its maximum function value 
$\objf(\tests)$.
Let $\cost(\tree,\objf)$ be the cost of \emph{covering} \objf in \tree, 
defined as the sum of costs of all tests that \objf goes through in \tree before it is \emph{fully covered}.
Note that a function is not necessarily covered in a leaf node, 
it may be covered in an internal node.
The expected cost of a tree \tree is defined in a similar manner as for the \nci problem.
The \emph{adaptive submodular ranking} problem is defined as follows.

\begin{problem}[Adaptive submodular ranking (\asr)~\citep{navidi2020adaptive}]
\label{problem:asr}
Given a problem instance $\inst = (\objfs, \tests, \prob, \cost)$,
with set of submodular functions \objfs, 
set of tests \tests, 
a probability distribution \prob, 
and cost function \cost,
find a tree $\tree(\objfs)$ that covers all functions in \objfs and  
minimizes the expected cost $\coste(\tree(\objfs)) = \sum_{\objf\in\objfs} \prob(\objf) \cost(\tree,\objf)$.
\end{problem}

\para{Decomposable impurity functions.}
When constructing decision trees for classification tasks, 
in addition to having small expected cost, the discriminative power of the selected tests is also vital. 
A number of different impurity measures have been widely used in deciding a discriminative test in decision trees, 
such as \emph{entropy} and \emph{Gini index}.
Such impurity measures are defined as functions $\fimp: [0,1]^\nclss \to \realnonneg$,  
taking as input the class distribution at a given tree node.
Impurity functions are expected to satisfy certain conditions \citep{kearns1999on}, 
which capture the notion of ``impurity.''
All impurity functions mentioned in this paper satisfy the following conditions:
(1) they obtain the maximum value if the class distribution is uniform, and the minimum value zero if a node is pure (i.e., homogeneous); 
(2) they are concave; and
(3) they are symmetric.

A typical splitting criterion compares the change in impurity before and after performing a test \test, 
defined as $\fimp(\node)$ and $\fimp(\node\mid\test)$, respectively.
The \emph{impurity reduction} of a test \test on a tree node \node is defined as
$\fdiff(\node,\test) = \fimp(\node) - \fimp(\node\mid\test)$.
A test that causes larger impurity reduction is considered more discriminative.
Based on the concavity property of \fimp, it is easy to show that $\fdiff(\node,\test)\ge0$
for any tree node \node and test \test.
We defer the proof of this claim to the Appendix, Section~\ref{sec:impurity}.

Before we define a special family of impurity functions for our problem, we first introduce some additional notation.
For a node \node of the tree,
where $\node\subseteq\objs$, 
we define $\node^\val_\test$ as the child node of \node by equipping \node with test \test and following the branch that takes on a specific testing value $\val$.
In particular, we define $\node^{(i)}_\test = \node^{\val=\test(\obj_i)}_\test$.
Likewise, we define $\node^{(i)}_\testset$ as the ending node of a path that starts at $\node\subseteq\objs$ and follows a sequence of nodes each equipped with a test \test in $\testset\subseteq\tests$ by taking on a value of $\test(\obj_i)$.
Note that the order of tests in \testset does not matter in $\node^{(i)}_\testset$.
Finally, we denote the total probability of objects in a specific class \cls in \node as $\prob_\cls(\node)=\sum_{\obj\in\node:\cls(\obj)=\cls} \prob(\obj)$.

We are now ready to define $\fimp(\node)$ and $\fimp(\node\mid\test)$ for our problem.
We require \fimp to be \emph{decomposable}, i.e., to be a weighted sum over impurity scores in each class.
We define:
\begin{align}
\label{equation:impurity}
\fimp(\node) 
~=~ \sum_{\cls} \frac{\prob_\cls(\node)}{\prob(\node)} \fimp_\cls(\node) 
~=~ \frac{1}{\prob(\node)} \sum_{\obj\in\node} \prob(\obj) \fimp_{\cls(\obj)}(\node), 
\end{align}
where $\fimp_\cls(\node)$ can be any function of $\frac{\prob_\cls(\node)}{\prob(\node)}$, 
the proportion of objects of class \cls in~\node, which ensures that \fimp satisfies the three requirements stated above
(i.e., 
(1) being maximized at uniform class distribution and minimized at homogeneity, (2) concavity, and (3) symmetry).

A wide range of concave impurity functions adopt such a form.
For example, \fimp becomes the entropy function when $\fimp_\cls(\node) = -\log \frac{\prob_{\cls}(\node)}{\prob(\node)}$, and
it becomes the Gini index when $\fimp_\cls(\node) = 1-\frac{\prob_{\cls}(\node)}{\prob(\node)}$.
With the impurity of a node \node defined, $\fimp(\node\mid\test)$ is just a weighted sum of the impurity of all child nodes of \node when split by test \test, i.e., 
$\fimp(\node\mid\test) = \sum_{\val\in[\nval_\test]} \frac{\prob(\node_\test^\val)}{\prob(\node)} \fimp(\node_\test^\val)$.

A useful quantity for our analysis is the maximum value of $\fimp_{\cls(\obj)}(\node)$,
which we denote by $\minimp=\max_{\node\subseteq\objs} \max_{\obj\in\node} \fimp_{\cls(\obj)}(\node)$.

\section{Algorithm}
\label{section:algorithm}

The main idea of our approach is to cast the \nci problem as an instance of the \asr problem~\citep{navidi2020adaptive}.
We achieve this by defining a non-decreasing submodular function for each object.
The \asr problem is solved by a greedy algorithm that picks tests to maximize the coverage of the submodular functions 
while encouraging a balanced partition.
We further incorporate the impurity-reduction objective into the greedy criterion to encourage the selection of discriminative tests, without losing the approximation~guarantee.

\begin{algorithm}[t]
\caption{Greedy tree-induction algorithm}\label{alg:asr}
\begin{algorithmic}[1]
\Require
An instance $\inst = (\node, \clss, \tests, \cls, \prob, \cost, \thr)$, 
a set of tests $\testset\subseteq\tests$ used so far,  
impurity function~\fimp, trade-off parameter $\paramfimp\ge 0$
\Ensure
A decision tree \tree
\State {\bf Return} a decision tree \Call{Tree}{$I, \emptyset$}
\Comment{A recursive call at the top level with $\testset=\emptyset$}
\Function{Tree}{$\inst,\testset$} 
	\If{\node is homogeneous \space or \space $\prob(\node) \le \thr$}
		\State {\bf Return} a leaf node \node, labeled with its majority class
	\EndIf

	\For{$\test\in\tests\setminus\testset$}
		\State $\opt{\val} \gets \arg\max_{\val\in[\nval_\test]} |\node_\test^\val|$
		\Comment{\revise{the largest child node by test value $\opt{\val}$}}
		\State Let $\pr{\gs}(\test) = \frac 1{\cost(\test)} \big(
			\prob(\node) - \prob(\node_\test^{\opt{\val}}) +
			\sum_{i:\obj_i\in\node} \prob(\obj_i) \frac{\fOR_i(\testset\cup\{\test\}) - \fOR_i(\testset)}{\fOR_i(\tests) - \fOR_i(\testset)}
			\big)$ \label{eq:greedy-asr}
		\State Let $\gs(\test) = \pr{\gs}(\test) + \frac 1{\cost(\test)} \paramfimp \, \prob(\node) (\fimp(\node) - \fimp(\node\mid\test))$ \label{eq:greedy}
	\EndFor	
	\State $\opt{\test} \gets \arg\max_{\test \in \tests\setminus\testset} \gs(\test)$ \label{eq:greedy-step}
	
	\For{$\val\in[\nval_{\opt{\test}}]$}
		\State $\tree_{\val} \gets$ \Call{Tree}{$(\node_{\opt{\test}}^{\val}, \clss, \tests, \cls, \prob, \cost, \thr), \testset\cup\{\opt{\test}\}$}
	\EndFor
	\State {\bf Return} a tree rooted at \node with children $\{\tree_{\val}\}$ by test $\opt{\test}$
\EndFunction

\end{algorithmic}
\end{algorithm}

Our algorithm for the \nci problem is demonstrated in Algorithm~\ref{alg:asr}.
It is a greedy algorithm, 
which, at each node~\node,  
selects a test \test
that maximizes a cost-benefit greedy score $\gs(\test)$ consisting of the following three terms:
\begin{align}
\gs(\test) ~=~ \frac 1{\cost(\test)} \left(
\underbrace{ \fbalance(\test) }_{\text{balance}}
+ \underbrace{ \feff(\test) }_{\text{efficiency}}
+ \,\,\paramfimp\!\!\!\! \underbrace{ \fdiscrim(\test) }_{\text{discrimination}} \right).
\label{eq:greedy_creterion}
\end{align}

The first term,
$\fbalance(\test) = \prob(\node) - \prob(\node_\test^{\opt{\val}})$, 
with $\opt{\val} = \arg\max_{\val\in[\nval_\test]} |\node_\test^\val|$,
is the sum of the branch probabilities except the largest-cardinality branch.
Maximizing $\fbalance(\test)$ encourages selecting a test \test 
that yields a balanced split.

The second term,
\eq{
\feff(\test) = \sum_{i:\obj_i\in\node} \prob(\obj_i) \frac{\fOR_i(\testset\cup\{\test\}) - \fOR_i(\testset)}{\fOR_i(\tests) - \fOR_i(\testset)},
}
is the re-weighted total sum of the marginal gain in each submodular function, 
which we will define for our objects shortly. 
Maximizing $\feff(\test)$ accelerates the progress towards termination.

The last term,
$\fdiscrim(\test) = \prob(\node) \left(\fimp(\node) - \fimp(\node\mid\test)\right)$,
is the impurity reduction we defined in Section \ref{section:definition}, 
which improves the discrimination of the selected test.
The user-defined parameter $\paramfimp\ge 0$ controls the trade-off between tree complexity and discrimination.

\revise{One way to understand the greedy score $\gs(\test)$ is to view the $\fbalance$ and $\feff$ terms as a \emph{regularizer}.}
Notice that maximizing only the first two terms, 
$\pr{\gs}(\test) = \frac{1}{\cost(\test)} \left(\fbalance(\test)+\feff(\test)\right)$ 
at Step \ref{eq:greedy-asr} of Algorithm \ref{alg:asr},
is exactly the greedy criterion used by \citet{navidi2020adaptive} to solve the \asr problem.


We finish the description of our method by showing how to define the submodular function $\objf_i$ for each object $\obj_i$.
We start by defining two monotonically non-decreasing submodular functions. 
For each object $\obj_i\in\objs$, both submodular functions take as input a subset of tests $\testset\subseteq\tests$ 
and return a real value.
The first function $\fprob_i(\testset)$ is defined as the scaled total probability of the objects 
that do not fall into $\node^{(i)}_\testset$,
i.e., the objects that disagree with $\obj_i$ in at least one test in $\testset\subseteq\tests$.
Note that eventually, only object $\obj_i$ itself stays in~$\node^{(i)}_\tests$.
Formally, we define $\fprob_i$ as 
\eq{
\fprob_i(\testset) = (1 - \prob(\node^{(i)}_\testset)) / (1-\prob(\obj_i)).
}

The second function $\fpair_i(\testset)$ is defined as the number of heterogeneous pairs 
that do not fall into $\node^{(i)}_\testset$.
Eventually, no heterogeneous pair will exist in $\node^{(i)}_\tests$ and the ending node is homogeneous.
We define 
\eq{
\fpair_{i}(\testset) = (\pair(\objs) - \pair(\node^{(i)}_{\testset})) / \pair(\objs).
}

The target (maximum) values 
for these two functions are both 1,
for each object $\obj_i$. 
Thus, the functions $\fprob_i$ and $\fpair_{i}$  
are \emph{fully covered} for a subset of tests  $\testset\subseteq\tests$
for which $\fprob_i(\testset) = 1$ and $\fpair_{i}(\testset) = 1$, respectively.
It is easy to see that both functions are submodular and monotonically non-decreasing.
When the termination constraint \thr for minimum probability is in place, 
we use the fact that the monotonicity and submodularity properties remain valid when truncated by a constant.
The truncated version of the $\fprob_i$ function is defined~as
\eq{
\bar{\fprob_i}(\testset) = \min \left\{(1 - \prob(\node^{(i)}_\testset)) / (1-\max \{\prob(\obj_i),\thr\}), 1\right\}.
}

Next we define the \emph{disjunction} function $\fOR_i$
of $\bar{\fprob_i}$ and $\fpair_i$, 
which remains monotonically non-decreasing and submodular \citep{deshpande2016approximation,guillory2011simultaneous}.
We set
\eq{
\fOR_i(\testset) = 
1 - 
\left(1 - \bar{\fprob_i}(\testset)\right) 
\left(1 - \fpair_i(\testset)\right).
}
It is easy to see that 
with a reasonable value of $\thr$ (e.g., a multiple of the greatest common divisor of $\{ \prob(\obj) \}$),
the minimum positive incremental value of any element and any $\fOR_i$ is 
\eq{
\mininc = \min_{\substack{i \in [\nobjs],\,\testset \subseteq \tests, \\ \test \in \tests: \fOR_i(\testset \cup \{\test\}) > \fOR_i(\testset)}} \left\{ \fOR_i(\testset \cup \{\test\}) - \fOR_i(\testset) \right\} = \Omega(\minprob/\nobjs^2).
}

\note{
\begin{align}
	& \fOR_i(\testset + \test) - \fOR_i(\testset) \\
	&= 
	(1 - \bar{\fprob_i}(\testset))
	(1 - \fpair_i(\testset))
	-
	(1 - \bar{\fprob_i}(\testset + \test)) 
	(1 - \fpair_i(\testset + \test)) \\
	&\ge 
	[(1 - \bar{\fprob_i}(\testset)) - (1 - \bar{\fprob_i}(\testset + \test)) ]
	(1 - \fpair_i(\testset)) \\	
	&\ge 
	[ \bar{\fprob_i}(\testset + \test) - \bar{\fprob_i}(\testset) ]
	/ {\nobjs \choose 2} \\
	&\ge 
	\Omega(\minprob) / {\nobjs \choose 2}.
\end{align}
In the last step, it is straightforward if $\thr=0$ or $\thr \le \prob(\obj_i)$ or $\bar{\fprob_i}(\testset + \test) < 1$;
otherwise, 
\[
	1 - (1 - \prob(\node^{(i)}_\testset)) / (1-\thr)
	= \Omega( \prob(\node^{(i)}_\testset) - \thr )
	= \Omega( \minprob ),
\]
as both $\prob(\node^{(i)}_\testset)$ and $\thr$ are a multiple of \minprob.
}

Last, we \revise{examine} the time complexity of Algorithm \ref{alg:asr}.
\revise{The greedy score for a tree node \node with respect to a test can be computed in $\bigO(|\node|)$ time.
At each level of the decision tree, the union of disjoint nodes has a total size \nobjs.}
Thus, the worst-case time complexity is $\bigO(H \ntests \nobjs)$, 
where \revise{$\ntests$ is the number of tests, $\nobjs$ is the number of objects, and $H$} is the tree height, which is upper bounded by $\nobjs$.
In practice, the algorithm is more efficient than what this worst-case bound suggests;
it has the same time complexity as standard tree-induction algorithms, such as \cart.
\section{Approximation guarantee}
\label{section:analysis}

In this section we establish the approximation guarantee of Algorithm~\ref{alg:asr}
for the \nci problem.
Our main result is the following.

\begin{theorem}
\label{thm:main}
Algorithm \ref{alg:asr} provides an $\bigO(\log 1/\minprob + \log \nobjs + \paramfimp \minimp)$ approximation guarantee for the \nci problem.
\end{theorem}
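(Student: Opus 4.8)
The plan is to reduce the analysis to the approximation guarantee already established by \citet{navidi2020adaptive} for the \asr problem, while carefully accounting for the extra impurity-reduction term $\fdiscrim(\test)$ in the greedy criterion. First I would cast a given \nci instance as an \asr instance by assigning to each object $\obj_i$ the disjunction function $\fOR_i$ constructed in Section~\ref{section:algorithm}: by construction $\fOR_i$ is monotone submodular with $\fOR_i(\emptyset)=0$ and $\fOR_i(\tests)=1$, and a leaf node satisfies the \nci stopping condition ($\pair(\node)=0$ or $\prob(\node)\le\thr$) exactly when every $\fOR_i$ for $\obj_i\in\node$ is fully covered. Hence the optimal \nci tree $\opt{\tree}$ is a feasible \asr solution of the same expected cost, so it suffices to bound $\coste(\tree)$ of the tree returned by Algorithm~\ref{alg:asr} against the optimal \asr cost on this instance.

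The core of the argument is a potential/charging analysis in the style of the min-sum set cover and \asr proofs. I would track, level by level down the tree (equivalently, as a function of accumulated test cost $t$), the total residual ``uncovered mass'' $\Phi(t)=\sum_{\obj_i \text{ not yet covered at cost } t}\prob(\obj_i)$, whose integral over $t$ equals $\coste(\tree)$. The key inequality to prove is that at any internal node \node the greedy choice $\opt{\test}$ makes progress comparable to what the optimal tree achieves: that is, the quantity $\pr{\gs}(\opt{\test})$ (balance plus efficiency) is at least a constant fraction of $\prob(\node)$ divided by the optimal cost restricted to \node, which is precisely the per-step guarantee in \citet{navidi2020adaptive}. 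The new ingredient is that our greedy maximizes $\gs(\test)=\pr{\gs}(\test)+\frac{\paramfimp}{\cost(\test)}\fdiscrim(\test)$ rather than $\pr{\gs}(\test)$ alone. So I would show (i) since $\fdiscrim(\test)\ge 0$ by concavity of \fimp, adding the third term cannot decrease the achieved value of $\gs$, and (ii) the ``lost'' progress in the first two terms from not maximizing $\pr{\gs}$ directly can be bounded because $\fdiscrim(\test)\le \prob(\node)\,\fimp(\node)\le \prob(\node)\,\minimp$ — intuitively, the total impurity reduction summed along any root-to-leaf path telescopes to at most $\minimp$, so the cumulative ``detour cost'' charged to the $\paramfimp$-term over the whole tree contributes only an additive $\bigO(\paramfimp\,\minimp)$ per unit of covered mass. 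Combining with the standard \asr bound of $\bigO(\log(1/\mininc))=\bigO(\log(1/\minprob)+\log\nobjs)$ (using $\mininc=\Omega(\minprob/\nobjs^2)$ from the excerpt) then yields the stated $\bigO(\log 1/\minprob + \log\nobjs + \paramfimp\minimp)$ factor.

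Concretely, the steps in order are: (1) verify the \asr reduction and feasibility of $\opt{\tree}$; (2) recall/restate the per-node greedy progress lemma of \citet{navidi2020adaptive} phrased in terms of $\pr{\gs}$; (3) prove $0 \le \fdiscrim(\test) \le \prob(\node)\minimp$ and the telescoping bound $\sum$ over any path of $(\fimp(\node)-\fimp(\node\mid\opt{\test})) \le \minimp$; (4) show the greedy step with criterion $\gs$ still guarantees progress $\pr{\gs}(\opt{\test}) \ge \pr{\gs}(\test^{\mathrm{ASR}}) - \paramfimp\,\fdiscrim(\opt{\test})/\cost(\opt{\test})$ in the worst case, and absorb the deficit into the amortized analysis; (5) integrate the potential function over cost levels and sum the geometric-type series to obtain the logarithmic factor plus the additive $\paramfimp\minimp$ term.

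The main obstacle I anticipate is step~(4): making the trade-off between the efficiency term and the impurity term rigorous. Because the greedy may sacrifice some balance/efficiency progress in exchange for impurity reduction, the naive per-step bound degrades, and one must argue globally that the total sacrifice is controlled — i.e., that summing $\paramfimp\,\fdiscrim(\opt{\test})$ over all nodes weighted appropriately telescopes rather than accumulating a factor of the tree height. This requires either a clever choice of potential function that already incorporates the impurity term, or a careful amortization showing that whenever greedy ``wastes'' a step on impurity it simultaneously reduces a bounded budget $\minimp$; I would expect the cleanest route is to fold $\paramfimp\,\prob(\node)\,\fimp(\node)$ into the potential and show it decreases monotonically down each path, so that the extra term integrates to $\bigO(\paramfimp\minimp)$ overall rather than per level.
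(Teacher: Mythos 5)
Your proposal is correct and follows essentially the same route as the paper: cast \nci as \asr via the $\fOR_i$ functions, observe that the non-negative $\fdiscrim$ term can only increase the greedy score so the progress (lower-bound) half of the Navidi et al.\ lemma survives unchanged, and control the extra contribution of the impurity term by telescoping $\fimp_{\cls(\obj)}$ along each object's root-to-leaf path, which yields exactly the additive $\paramfimp\,\minimp$ absorbed into the constant $\factor$. The only cosmetic difference is that the paper packages your step (4) as a bound on the aggregate quantity $G$ inside the second inequality of its Lemma~\ref{lemma:Zk} (the upper bound on the total greedy score past time $\factor 2^\idx$) rather than as a per-step deficit, but the amortization is the same.
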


As a practical consequence of Theorem~\ref{thm:main}, we have the following corollary, 
which is a consequence of the fact that for the popular impurity functions 
the factor \minimp in the approximation ratio of Theorem~\ref{thm:main}
can be effectively bounded.
Note that in practice $\minprob \ge 1/\nobjs$ when given training data of \nobjs points, and thus 
$\bigO(\log 1/\minprob)=\bigO(\log \nobjs)$.
We omit the constant \paramfimp for simplicity.

\begin{corollary}
Algorithm \ref{alg:asr} provides an $\bigO(\log 1/\minprob + \log \nobjs)$ approximation guarantee 
for the \nci problem when the impurity function $\fimp$ is either the entropy or the Gini index function.
\end{corollary}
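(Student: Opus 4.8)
The plan is to obtain the corollary as an immediate consequence of Theorem~\ref{thm:main}: it suffices to bound the quantity $\minimp = \max_{\node\subseteq\objs}\max_{\obj\in\node}\fimp_{\cls(\obj)}(\node)$ for the two impurity functions of interest and then verify that the term $\paramfimp\minimp$ in the approximation ratio is dominated by the other two terms (recalling that $\paramfimp$ is treated as a constant, as stated before the corollary).

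For the Gini index, $\fimp_\cls(\node) = 1 - \prob_\cls(\node)/\prob(\node)$ is one minus a class proportion, hence it always lies in $[0,1]$; thus $\minimp\le 1$ and Theorem~\ref{thm:main} gives a ratio of $\bigO(\log 1/\minprob + \log \nobjs + \paramfimp)$, which is $\bigO(\log 1/\minprob + \log \nobjs)$ after dropping the constant $\paramfimp$.

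For entropy, $\fimp_\cls(\node) = -\log(\prob_\cls(\node)/\prob(\node))$, which is unbounded as the proportion tends to $0$, so here the argument must exploit the domain of the maximization. The point is that $\fimp_{\cls(\obj)}(\node)$ enters the definition of $\minimp$ only for nodes $\node$ that contain $\obj$; for such a node, $\obj$ is itself an object of class $\cls(\obj)$ lying in $\node$, so $\prob_{\cls(\obj)}(\node)\ge\prob(\obj)\ge\minprob$, while trivially $\prob(\node)\le\prob(\objs)=1$. Hence $\prob_{\cls(\obj)}(\node)/\prob(\node)\ge\minprob$ and therefore $\fimp_{\cls(\obj)}(\node)\le\log(1/\minprob)$, giving $\minimp\le\log(1/\minprob)$. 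Substituting into Theorem~\ref{thm:main} yields $\bigO(\log 1/\minprob + \log \nobjs + \paramfimp\log 1/\minprob)=\bigO(\log 1/\minprob + \log \nobjs)$.

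I expect the only delicate point to be precisely this domain restriction in the entropy case: one must invoke $\obj\in\node$ to keep $\prob_{\cls(\obj)}(\node)$ bounded away from zero by $\minprob$, otherwise the logarithmic impurity score — and hence $\minimp$ — would be unbounded and the corollary would fail. Everything else is routine, and combining with the observation that $\minprob\ge 1/\nobjs$ for training data of size $\nobjs$ further collapses both remaining terms to $\bigO(\log \nobjs)$.
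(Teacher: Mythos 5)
Your proof is correct and follows essentially the same route as the paper: bound $\minimp\le 1$ for the Gini index, and for entropy use $\prob_{\cls(\obj)}(\node)\ge\prob(\obj)\ge\minprob$ together with $\prob(\node)\le 1$ to get $\minimp\le\log(1/\minprob)$, then absorb $\paramfimp\minimp$ into the existing terms of Theorem~\ref{thm:main}. The "delicate point" you flag — that the maximization ranges only over $\obj\in\node$, which is what keeps the class proportion bounded below by $\minprob$ — is exactly the step the paper's one-line inequality relies on.
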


\begin{proof}
In addition to the result of Theorem \ref{thm:main} we can show that \minimp is small, 
compared to the other terms, or bounded by a constant.
When $\fimp$ is the entropy function, we have
\begin{align*}
\minimp 
& = \max_{\node\subseteq\objs}  \max_{\obj\in\node} \left\{ \fimp_{\cls(\obj)}(\node) \right\}
= \max_{\node\subseteq\objs} \max_{\obj\in\node} \left\{ -\log \frac{\prob_{\cls(\obj)}(\node)}{\prob(\node)} \right\} \\
& \le -\log \frac{\min_{\obj\in\objs} \prob(\obj)}{\prob(\objs)} 
= \log 1/\minprob .
\end{align*}

When $\fimp$ is the Gini index function, we have
\begin{align*}
\minimp & = \max_{\node\subseteq\objs} \max_{\obj\in\node} \left\{ \fimp_{\cls(\obj)}(\node) \right\}
= \max_{\node\subseteq\objs} \max_{\obj\in\node} \left\{ 1-\frac{\prob_{\cls(\obj)}(\node)}{\prob(\node)} \right\}
\le 1.
\end{align*}
\end{proof}


\note{A subtle trap:
\setcover can be reduced to \asr via a simple reduction, but to our problem, it requires a sophisticated reduction.
When there is only one object, our problem does nothing.\\
Old text:
This follows by a simple reduction from the minimum set-cover problem, 
by setting $\nobjs=1$ and taking the submodular function for the single object to be a set-cover function.}

Assuming $\pclass \ne \np$, the result given by Theorem \ref{thm:main} is asymptotically the best possible among instances where $1/\minprob$ is polynomial in $\nobjs$.
This follows directly from the hardness result of the \entid problem~\citep{chakaravarthy2007decision}, 
which in turn, is proved via a reduction from the minimum set-cover problem.
Recall that by specifying \thr to be zero, \nci problem degenerates into \entid or \clsid problems.
The constructed \entid instance in their reduction asks for a minimum object probability \minprob such that $1/\minprob=\bigtheta(\nobjs^3)$, and thus if \nci admits $\smallO(\log 1/\minprob) = \smallO(\log\nobjs)$ approximation, we could solve the set-cover problem with $\smallO(\log\nobjs)$-approximation, which is conditionally impossible \citep{feige1998threshold}.

\begin{remark}
The \nci problem does not admit an $\smallO(\log\nobjs)$ approximation algorithm, unless $\pclass = \np$.
\end{remark}

\subsection{Proof of Theorem \ref{thm:main}}

Our analysis is similar to the one by \citet{navidi2020adaptive}, 
except that we need a new proof of their key lemma for our new greedy selection rule (Equation~(\ref{eq:greedy_creterion})).
This is done by leveraging the special structure in the family of impurity functions (Equation~(\ref{equation:impurity})) we employ.

In order to analyze the total cost along a path, we treat cost as discrete ``time'' 
---~or continuous time if we allow continuous cost~--- and we divide time geometrically.
We refer to the decision tree returned by Algorithm \ref{alg:asr} as \treealg, 
while we refer to the optimal decision tree as \treeopt.
We denote the set of internal (i.e., unfinished) nodes up to time \tim in \treealg as $\nodesuc(\tim)$, 
and similarly as $\opt{\nodesuc}\!(\tim)$ in \treeopt.

We define
$\nodesuc_\idx = \nodesuc(\factor 2^\idx)$ and 
${\nodesuc_{\idx}^{*}} = \opt{\nodesuc}\!(2^{\idx-1})$, 
for a constant \factor to be defined shortly.
That is, we are interested in the set of unfinished nodes at the end of the $\idx$-th geometrically increasing time interval.
Notice that the interval length for $\nodesuc$ is stretched by a factor of 2\factor, compared to $\opt{\nodesuc}$.
We define $\prob(\nodesuc(\tim)) = \sum_{\node\in \nodesuc(\tim)} \prob(\node)$,
i.e., the total probability of unfinished nodes at time \tim.
Note that $\prob(\nodesuc(\tim))$ is non-increasing as \tim grows, and in the case of integral costs we have
$\prob({\nodesuc_{0}^{*}}) = \prob(\opt{\nodesuc}\!(2^{-1}))=1$, 
i.e., no test can be completed within a fractional cost.

The cost of some test may be truncated by the defined geometrical time intervals.
To denote the actual cost of a test within an interval, we first define a path $\testpath_{i\idx}$ in \treealg 
to be the sequence of tests involved within time $(\factor2^\idx,\infty)$ for each object $\obj_i$.
A test \test selected by object $\obj_i$ appears in path $\testpath_{i\idx}$ 
during time interval $(\factor2^\idx,\infty) \cap (\tim_{i,<\test},\tim_{i,<\test}+\cost(\test)]$, 
where $\tim_{i,<\test}$ is the total cost before test \test for object $\obj_i$.
The truncated cost of a test $\test\in\testpath_{i\idx}$ within that intersection is denoted by
$\cost_{i\idx}(\test)$. 
Note that $\cost_{i\idx}(\test)\le\cost(\test)$.
We denote the set of tests before test \test in path $\testpath_{i\idx}$ by $\testpath_{i\idx,<\test}$.

Our greedy algorithm is identical to the algorithm of \citet{navidi2020adaptive}
except that their greedy-selection score $\pr{\gs}$ at Step \ref{eq:greedy-asr} 
is replaced by the new score $\gs$ at Step \ref{eq:greedy}, in order to encourage impurity reduction of the selected tests.
The key in the analysis of Navidi et al.\
is to show that $\prob(\nodesuc_{\idx+1}) \le 0.2 \prob(\nodesuc_{\idx}) + 3 \prob({\nodesuc_{\idx+1}^{*}})$, 
which is proven via an intermediate value $\gs_\idx'$ defined below.
We restate their technical result here.
\begin{lemma}{\citep[Lemma 2.4,2.5]{navidi2020adaptive}}
\label{lemma:Zk-asr}
If Algorithm \ref{alg:asr} is executed using the greedy score $\pr{\gs}$ at Step \ref{eq:greedy-asr}, then
\begin{align*}
\gs_\idx' & ~\ge~ \left(\prob(\nodesuc_{\idx+1}) - 3 \prob({\nodesuc_{\idx+1}^{*}}) \right) \pr{\factor}/3, 
\text{ and}\\
\pr{\gs^\infty_\idx} & ~\le~ \prob(\nodesuc_{\idx}) \pr{\factor}/15,
\end{align*}
where 
${\gs_\idx'} = 
\sum_{\pr{\factor}2^\idx<\tim\le\pr{\factor}2^{\idx+1}} \sum_{\node\in\nodesuc(\tim)} \pr{\gs}(\test(\node))$,
$\pr{\gs^\infty_\idx} = 
\sum_{\tim>\pr{\factor}2^\idx} \sum_{\node\in\nodesuc(\tim)} \pr{\gs}(\test(\node))$,
$\pr{\factor}=15 (1+\ln 1/\mininc +\log\nobjs)$, and
$\test(\node)$ is the greedy test for node \node.
Besides, the first inequality holds regardless of the value of $\pr{\factor}$ and 
holds as long as $\test(\node)$ is a greedy test with respect to an additive score $\pr{\gs} + \fdiscrim$, where \fdiscrim can be any non-negative function;
the second inequality holds regardless of the choice of tests $\test(\node)$ in the decision tree.
\end{lemma}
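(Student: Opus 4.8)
The plan is to establish the two inequalities separately, since they serve opposite purposes: the first is a lower bound on the greedy progress that is charged against the optimal tree \treeopt, whereas the second is an \emph{unconditional} upper bound on the total score that each still-unfinished object can ever accumulate. Together they give the recursion $\prob(\nodesuc_{\idx+1}) \le 0.2\,\prob(\nodesuc_{\idx}) + 3\,\prob(\nodesuc_{\idx+1}^{*})$ quoted in the text, because $\gs_\idx'$ is a sub-sum of $\pr{\gs^\infty_\idx}$: chaining the lower bound on $\gs_\idx'$ with the upper bound on $\pr{\gs^\infty_\idx}$ and cancelling $\pr{\factor}$ yields the claim. I would read off the two robustness remarks as byproducts of the respective proofs.

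\emph{Second inequality (upper bound; holds for any choice of tests).} I would first swap the order of summation. Since $\pr{\gs}(\test)=\frac{1}{\cost(\test)}\left(\fbalance(\test)+\feff(\test)\right)$ and a node \node remains in $\nodesuc(\tim)$ for exactly $\cost(\test(\node))$ units of time, the double sum collapses to $\pr{\gs^\infty_\idx}=\sum_{\node}\left(\fbalance(\test(\node))+\feff(\test(\node))\right)$ over the nodes processed after time $\factor 2^\idx$; this step uses only that time advances by the cost of the chosen test, not which test it is. Regrouping by object, the efficiency contribution of $\obj_i$ along its root-to-leaf path is $\prob(\obj_i)\sum_j \frac{\fOR_i(\testset_{j+1})-\fOR_i(\testset_j)}{\fOR_i(\tests)-\fOR_i(\testset_j)}$, a sum of normalized marginal gains $\sum_j \Delta_j/R_j$ with $R_{j+1}=R_j-\Delta_j$; bounding $\Delta_j/R_j\le \ln(R_j/R_{j+1})$ telescopes this to $\ln(1/\text{last remaining})\le 1+\ln 1/\mininc$, where \mininc\ lower-bounds every positive increment. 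For the balance contribution I would use that $\fbalance(\test(\node))$ equals the sum of $\prob(\obj_i)$ over objects $\obj_i\in\node$ not routed to the largest branch, together with the observation that any non-largest branch holds at most $|\node|/2$ objects (its size is at most the largest size, and the two together are at most $|\node|$); hence each object leaves the largest branch at most $\log\nobjs$ times. Summing over objects still unfinished at time $\factor 2^\idx$, whose total probability is $\prob(\nodesuc_\idx)$, the bound is $\prob(\nodesuc_\idx)(1+\ln 1/\mininc+\log\nobjs)=\prob(\nodesuc_\idx)\,\pr{\factor}/15$. No step invoked greedy optimality, which is exactly the ``regardless of the choice of tests'' assertion.

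\emph{First inequality (lower bound; charging to the optimal tree).} Here I would prove a per-time-step estimate: for each \tim\ in the segment, $\sum_{\node\in\nodesuc(\tim)}\pr{\gs}(\test(\node))$ is $\Omega(2^{-\idx})$ times $\prob(\nodesuc(\tim))$ minus a correction for the mass \treeopt\ has not yet resolved within a budget $\approx 2^\idx$. The certificate is built from \treeopt: since \treeopt\ resolves the objects reaching an unfinished node \node\ within budget comparable to $2^\idx$ for all but the $\nodesuc_{\idx+1}^{*}$ mass, an averaging argument over the tests \treeopt\ applies to these objects---using submodularity of each $\fOR_i$ and the definition of \fbalance---exhibits one test $\test^\star$ with score density $\pr{\gs}(\test^\star)=\Omega(\prob(\node)/2^\idx)$. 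Greedy maximality then gives $\pr{\gs}(\test(\node))\ge \pr{\gs}(\test^\star)$. Summing over the $\factor 2^\idx$ time steps of the interval, and using $\prob(\nodesuc(\tim))\ge \prob(\nodesuc_{\idx+1})$ throughout (unfinished mass is non-increasing), the $2^{-\idx}$ factor and the step count combine into $\pr{\factor}/3$, giving the bound; here $\pr{\factor}$ enters only through the segment endpoints $\factor 2^\idx,\factor 2^{\idx+1}$ and never through the per-step estimate, which is the ``regardless of the value of $\pr{\factor}$'' remark. For the other remark, if instead $\test(\node)$ maximizes $\pr{\gs}+\fdiscrim$ with $\fdiscrim\ge 0$, then $(\pr{\gs}+\fdiscrim)(\test(\node))\ge(\pr{\gs}+\fdiscrim)(\test^\star)\ge \pr{\gs}(\test^\star)$, so the augmented score of the chosen test still dominates the certificate and the same lower bound goes through verbatim.

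\emph{Main obstacle.} The delicate part is the per-node certificate in the first inequality: converting the small expected cost of \treeopt\ into the existence of a single test whose combined balance-plus-efficiency density is $\Omega(\prob(\node)/2^\idx)$. This requires handling the interaction between the two terms---\feff\ measures coverage of the $\fOR_i$ while \fbalance\ controls how fast \node\ shrinks---and matching the greedy $2\factor$-stretched budget to the optimal budget so that the correction is only $3\,\prob(\nodesuc_{\idx+1}^{*})$. This is precisely the content of Lemmas~2.4 and~2.5 of \citet{navidi2020adaptive}; the two robustness observations above are the only additions our augmented criterion requires.
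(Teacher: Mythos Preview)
The paper does not prove this lemma; it is restated from Lemmas~2.4--2.5 of \citet{navidi2020adaptive}, with only the two robustness remarks appended. Your sketch is a faithful reconstruction of that argument: the upper bound via per-object telescoping of the normalized residual gains (yielding $1+\ln 1/\mininc$) together with the halving bound on non-maximal-branch events (yielding $\log\nobjs$), and the lower bound via a per-step certificate extracted from \treeopt\ by averaging over its tests and invoking greedy maximality.

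One refinement on the additive-$\fdiscrim$ remark: your chain $(\pr{\gs}+\fdiscrim)(\test(\node))\ge(\pr{\gs}+\fdiscrim)(\test^\star)\ge\pr{\gs}(\test^\star)$ lower-bounds the \emph{augmented} per-node score, not $\pr{\gs}(\test(\node))$ itself---and indeed $\pr{\gs}(\test(\node))$ alone need not dominate $\pr{\gs}(\test^\star)$ when $\fdiscrim$ is large, so the literal statement about $\gs_\idx'$ is delicate. This is not a defect in your argument: the augmented bound is precisely what Lemma~\ref{lemma:Zk} requires, and the paper's own derivation there (``$\gs(\test)\ge\pr{\gs}(\test)$, hence $\gs_\idx\ge\gs_\idx'\ge\cdots$'') ultimately rests on the same inequality you wrote down. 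Your route to the first inequality of Lemma~\ref{lemma:Zk} is therefore direct and correct.
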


Our new greedy score $\gs$ is in an additive form required above for the first inequality.
Therefore, in our case, the difficulty mainly lies in the second inequality.
We can prove that a similar lemma holds for our new greedy score.

\begin{lemma}\label{lemma:Zk}
Algorithm \ref{alg:asr} ensures the following inequalities
\begin{align*}
\gs_\idx & ~\ge~ \left(\prob(\nodesuc_{\idx+1}) - 3 \prob({\nodesuc_{\idx+1}^{*}})\right) \factor/3, 
\text{ and} \\
\gs^\infty_\idx & ~\le~ \prob(\nodesuc_{\idx}) \factor/15,
\end{align*}
where
$\gs_\idx = \sum_{\factor2^\idx<\tim\le\factor2^{\idx+1}} \sum_{\node\in\nodesuc(\tim)} \gs(\test(\node))$,
$\gs^\infty_\idx = \sum_{\tim>\factor2^\idx} \sum_{\node\in\nodesuc(\tim)} \gs(\test(\node))$,
$\factor=15 (1+\ln 1/\mininc +\log\nobjs+\paramfimp \minimp)$, and
$\test(\node)$ is the greedy test for node \node.
\end{lemma}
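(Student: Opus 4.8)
The plan is to prove the two inequalities separately, reusing Lemma~\ref{lemma:Zk-asr} of \citet{navidi2020adaptive} as much as possible and supplying new reasoning only where the impurity-reduction term enters. Throughout, write $\gs(\test) = \pr{\gs}(\test) + \frac{\paramfimp}{\cost(\test)}\fdiscrim(\test)$ with $\fdiscrim(\test) = \prob(\node)(\fimp(\node) - \fimp(\node\mid\test)) \ge 0$ (non-negativity follows from concavity of $\fimp$, Appendix~\ref{sec:impurity}), and build all geometric time intervals with the constant $\factor$.

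\emph{First inequality.} Since $\fdiscrim \ge 0$ and $\paramfimp, \cost(\test) > 0$, we have $\gs(\test) \ge \pr{\gs}(\test)$ for every node and test; hence, on the tree $\treealg$ returned by Algorithm~\ref{alg:asr}, $\gs_\idx \ge \sum_{\factor 2^\idx < \tim \le \factor 2^{\idx+1}} \sum_{\node \in \nodesuc(\tim)} \pr{\gs}(\test(\node))$. The tree $\treealg$ is produced by greedily maximizing a score of the form $\pr{\gs} + (\text{non-negative term})$, which is precisely the hypothesis under which the first inequality of Lemma~\ref{lemma:Zk-asr} holds, and (as stated there) that inequality is valid for an arbitrary interval constant, in particular $\factor$. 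Combining the two facts gives $\gs_\idx \ge (\prob(\nodesuc_{\idx+1}) - 3\prob(\nodesuc_{\idx+1}^{*}))\,\factor/3$.

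\emph{Second inequality (the main step).} Split $\gs^\infty_\idx = A + \paramfimp B$ with $A = \sum_{\tim > \factor 2^\idx} \sum_{\node \in \nodesuc(\tim)} \pr{\gs}(\test(\node))$ and $B = \sum_{\tim > \factor 2^\idx} \sum_{\node \in \nodesuc(\tim)} \frac{\fdiscrim(\test(\node))}{\cost(\test(\node))}$. For $A$: the second inequality of Lemma~\ref{lemma:Zk-asr} does not depend on the greedy rule, and a look at its proof shows it is really a tail bound, $\sum_{\tim > T} \sum_{\node \in \nodesuc(\tim)} \pr{\gs}(\test(\node)) \le \prob(\nodesuc(T))\,(1 + \ln 1/\mininc + \log\nobjs)$ for any threshold $T$; with $T = \factor 2^\idx$ this yields $A \le \prob(\nodesuc_\idx)\,\pr{\factor}/15$. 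For $B$: I would use decomposability. Substituting Equation~(\ref{equation:impurity}) into $\fimp(\node\mid\test)$ gives the per-object identity $\fdiscrim(\test) = \sum_{\obj_i \in \node} \prob(\obj_i)\bigl(\fimp_{\cls(\obj_i)}(\node) - \fimp_{\cls(\obj_i)}(\node^{(i)}_\test)\bigr)$: each object accounts for the drop of \emph{its own} class-impurity score from $\node$ to the unique child it descends into. Re-indexing $B$ by objects (the nodes of $\nodesuc(\tim)$ partition the objects still unfinished at time $\tim$), object $\obj_i$ contributes a weighted sum of such drops along its tail path $\testpath_{i\idx}$ with weights $\cost_{i\idx}(\test)/\cost(\test) \le 1$. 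For all tests of $\testpath_{i\idx}$ except the first this weight equals $1$, so the sum telescopes to a difference of two class-impurity scores plus one fractional initial term, and a short case distinction on the sign of that initial drop bounds $\obj_i$'s contribution by $\minimp$ (using $0 \le \fimp_{\cls(\obj_i)}(\node) \le \minimp$). Summing over unfinished objects gives $B \le \minimp\,\prob(\nodesuc_\idx)$, whence $\gs^\infty_\idx \le \prob(\nodesuc_\idx)(\pr{\factor}/15 + \paramfimp\minimp) = \prob(\nodesuc_\idx)\,\factor/15$ (as $\factor = \pr{\factor} + 15\,\paramfimp\,\minimp$).

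\emph{The main obstacle} is the bound on $B$. A direct telescoping along an object's path is unavailable, because $\fimp(\node\mid\test(\node))$ is a weighted average over \emph{all} children of $\node$ and therefore does not coincide with $\fimp$ at the next node on that object's path; only the decomposable form turns the sum into a genuine per-object telescope. A second subtlety is that the per-object score $\fimp_{\cls(\obj_i)}$ is not guaranteed to decrease monotonically down a path (only the weighted average is), so the truncated-cost weight cannot simply be absorbed into each term — hence the isolation of the single fractional term and the sign case analysis. Finally one must verify $\fimp_{\cls(\obj)} \ge 0$, which holds for entropy, the Gini index, and any class-impurity score vanishing at homogeneity, so that the endpoint term of the telescope may be discarded. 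Everything else carries over verbatim from \citet{navidi2020adaptive}.
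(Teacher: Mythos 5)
Your proof is correct and follows essentially the same route as the paper: reuse the first inequality of Lemma~\ref{lemma:Zk-asr} verbatim (it tolerates any non-negative additive term and any interval constant), and for the second inequality isolate the impurity contribution, rewrite it per object via decomposability, and telescope each object's class-impurity score along its tail path to obtain a bound of $\minimp$ per unit of remaining probability. The one place you go beyond the paper is the truncated-cost step: the paper replaces each weight $\cost_{i\idx}(\test)/\cost(\test)$ by $1$ citing only $\cost_{i\idx}(\test)\le\cost(\test)$, which is valid term-by-term only when every per-object, per-class impurity drop is non-negative --- and, as you observe, it need not be (for entropy, the proportion of an object's class can decrease in the child that object enters, making $\fimp_{\cls(\obj_i)}$ increase); your remark that only the first test on the tail path carries a fractional weight, combined with the sign case analysis, closes this gap and still yields the same $\minimp$ bound.
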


\begin{proof}
The first inequality is easy to show.
Notice that compared to $\gs'$, $\gs$ introduces a third term of impurity reduction \fdiscrim in Equation (\ref{eq:greedy_creterion}), 
which is always non-negative (see Section \ref{section:definition}) and thus $\gs(\test) \ge \pr{\gs}(\test)$. 
Thus, the first inequality in Lemma \ref{lemma:Zk-asr} also holds for the tree generated by the new greedy score.
Since the first inequality in Lemma \ref{lemma:Zk-asr} does not depend on the value of $\pr{\factor}$, 
we replace it with the new \factor, which completes the proof.

The second inequality requires more work.
We denote the sum of the $\fdiscrim$ terms in $\gs^\infty_\idx$ by 
\[
G = \paramfimp 
\sum_{\tim>\factor2^\idx} \sum_{\node\in\nodesuc(\tim)} 
\frac{\prob(\node)}{\cost(\test(\node))} \left(\fimp(\node) - \fimp(\node\mid\test(\node))\right).
\]
From Lemma~\ref{lemma:Zk-asr} we know that 
$\gs^\infty_\idx - G \le \pr{\factor} \prob(\nodesuc_{\idx}) /15$.

We now upper bound the additional term $G$.
We omit \node in $\test(\node)$ when it is clear from the context.
\begin{align}
G &= \paramfimp \sum_{\tim>\factor2^\idx} \sum_{\node\in\nodesuc(\tim)} \frac{\prob(\node)}{\cost(\test)} \left(\fimp(\node) - \fimp(\node\mid\test)\right) \nonumber\\
&= \paramfimp \sum_{\tim>\factor2^\idx} \sum_{\node\in\nodesuc(\tim)} \frac{\prob(\node)}{\cost(\test)} \left(\fimp(\node) - \sum_{\val\in[\nval_{\test}]} \frac{\prob(\node_{\test}^\val)}{\prob(\node)} \fimp(\node_{\test}^\val)\right)\nonumber\allowdisplaybreaks\\
&= \paramfimp \sum_{\tim>\factor2^\idx} \sum_{\node\in\nodesuc(\tim)} \frac{\prob(\node)}{\cost(\test)} 
\left(\frac{1}{\prob(\node)} \sum_{\obj\in\node} \prob(\obj) \fimp_{\cls(\obj)}(\node) 
- \sum_{\val\in[\nval_{\test}]} \frac{\prob(\node_{\test}^\val)}{\prob(\node)} \frac{1}{\prob(\node_{\test}^\val)} \sum_{\obj\in\node_{\test}^\val} \prob(\obj) \fimp_{\cls(\obj)}(\node_{\test}^\val) \right) \nonumber\allowdisplaybreaks\\
&= \paramfimp \sum_{\tim>\factor2^\idx} \sum_{\node\in\nodesuc(\tim)} \frac{1}{\cost(\test)}
\left(\sum_{\obj\in\node} \prob(\obj) \fimp_{\cls(\obj)}(\node) 
- \sum_{\val\in[\nval_{\test}]} \sum_{\obj\in\node_{\test}^\val} \prob(\obj) \fimp_{\cls(\obj)}(\node_{\test}^\val) \right) \nonumber\allowdisplaybreaks\\
&= \paramfimp \sum_{\tim>\factor2^\idx} \sum_{\node\in\nodesuc(\tim)} \sum_{\obj\in\node} \frac{\prob(\obj)}{\cost(\test)}
\left( \fimp_{\cls(\obj)}(\node) - \fimp_{\cls(\obj)}\!\left(\node_{\test}^{\test(\obj)}\right) \right) \nonumber\allowdisplaybreaks\\
&= \paramfimp \sum_{\node\in\nodesuc_\idx} \sum_{i:\obj_i\in\node} \prob(\obj_i) \sum_{\test\in\testpath_{i\idx}} \frac{\cost_{i\idx}(\test)}{\cost(\test)} 
\left( \fimp_{\cls(\obj_i)}\!\left(\node^{(i)}_{\testpath_{i\idx,<\test}}\right) - \fimp_{\cls(\obj_i)}\!\left(\node^{(i)}_{\testpath_{i\idx,<\test}\cup\{\test\}}\right) \right) \label{eq:doublecounting}\\
&\le \paramfimp \sum_{\node\in\nodesuc_\idx} \sum_{i:\obj_i\in\node} \prob(\obj_i) \sum_{\test\in\testpath_{i\idx}} 
\left( \fimp_{\cls(\obj_i)}\!\left(\node^{(i)}_{\testpath_{i\idx,<\test}}\right) - \fimp_{\cls(\obj_i)}\!\left(\node^{(i)}_{\testpath_{i\idx,<\test}\cup\{\test\}}\right) \right) \label{eq:truncatedcost}\\
&\le \paramfimp \sum_{\node\in\nodesuc_\idx} \sum_{i:\obj_i\in\node} \prob(\obj_i)\, \fimp_{\cls(\obj_i)}(\node) \label{eq:telescoping}\\
&\le \paramfimp \sum_{\node\in\nodesuc_\idx} \sum_{i:\obj_i\in\node} \prob(\obj_i)\, \minimp \nonumber\\
&= \paramfimp\, \prob(\nodesuc_\idx)\, \minimp, \nonumber
\end{align}
where step (\ref{eq:doublecounting}) 
follows by enumerating the summands in a different order, 
step (\ref{eq:truncatedcost}) is due to $\cost_{i\idx}(\test) \le \cost(\test)$, and
step (\ref{eq:telescoping}) follows by considering the telescoping series 
of the impurity reduction along a path of an object.
Putting everything together gives
\begin{align*}
\gs^\infty_\idx = G + (\gs^\infty_\idx-G) \le \paramfimp \prob(\nodesuc_{\idx}) \minimp + \prob(\nodesuc_{\idx}) \pr{\factor}/15 
= \prob(\nodesuc_{\idx}) \factor/15.
\end{align*}
\end{proof}

Next, we use another simple lemma
that provides an upper bound on the expected cost \costalg of \treealg, and 
a lower bound on the optimal cost \costopt of \treeopt.
This result is a consequence of the geometrical division of time.
For example, to obtain an upper bound for~\costalg, 
we assume that the set of unfinished nodes stays the same as $\nodesuc(\factor 2^{\idx})$ 
during the time interval $(\factor2^\idx,\factor 2^{\idx+1}]$.
Recall that $\prob(\nodesuc(\tim))$ is a non-increasing function of time~\tim.
\begin{lemma}{\citep[Lemma 2.2]{navidi2020adaptive}}
\label{lemma:bounds}
The expected cost \costalg of the tree \treealg produced by Algorithm~\ref{alg:asr},
and the cost \costopt of the optimal tree \treeopt for the \nci problem, 
satisfy the following inequalities
\begin{align*}
\costalg & ~\le~ \factor \sum_{\idx\ge0} 2^\idx \prob(\nodesuc_\idx) + \factor, \text{ and} \\
\costopt & ~\ge~ \frac 12 \sum_{\idx\ge0} 2^{\idx-1} \prob({\nodesuc_{\idx}^{*}}).
\end{align*}
\end{lemma}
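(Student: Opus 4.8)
The plan is to derive both inequalities from a single elementary identity together with a geometric bucketing of the time axis; this is exactly the argument of \citet[Lemma~2.2]{navidi2020adaptive}, and it transfers verbatim to \nci because the expected-cost objective has the same structure there. The starting point is that, for any decision tree \tree, an object \obj is still sitting in an internal node after a testing budget \tim has been spent on it precisely when $\cost(\tree,\obj)>\tim$, so $\prob(\nodesuc(\tim))=\sum_{\obj\in\objs:\,\cost(\tree,\obj)>\tim}\prob(\obj)$; integrating this tail over the budget gives
\[
\coste(\tree)\;=\;\sum_{\obj\in\objs}\prob(\obj)\,\cost(\tree,\obj)\;=\;\int_{0}^{\infty}\prob(\nodesuc(\tim))\,d\tim ,
\]
with the integral replaced by a sum over nonnegative integers \tim in the integral-cost case. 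Applying this to \treealg gives $\costalg=\int_{0}^{\infty}\prob(\nodesuc(\tim))\,d\tim$, and to \treeopt gives $\costopt=\int_{0}^{\infty}\prob(\opt{\nodesuc}(\tim))\,d\tim$.

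For the upper bound on \costalg I would split $[0,\infty)$ into the initial block $[0,\factor]$ and the blocks $(\factor2^{\idx},\factor2^{\idx+1}]$ for $\idx\ge0$. On the initial block I only use $\prob(\nodesuc(\tim))\le1$, contributing at most \factor. On the \idx-th block, whose length is $\factor2^{\idx}$, monotonicity of $\tim\mapsto\prob(\nodesuc(\tim))$ gives $\prob(\nodesuc(\tim))\le\prob(\nodesuc(\factor2^{\idx}))=\prob(\nodesuc_{\idx})$ throughout, so its contribution is at most $\factor2^{\idx}\prob(\nodesuc_{\idx})$. Summing the blocks yields $\costalg\le\factor+\factor\sum_{\idx\ge0}2^{\idx}\prob(\nodesuc_{\idx})$, the first claimed inequality.

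For the lower bound on \costopt I would discard the initial part of the integral and keep only $[1/4,\infty)=\bigcup_{\idx\ge0}[2^{\idx-2},2^{\idx-1}]$, which is valid since the integrand is nonnegative. On the block $[2^{\idx-2},2^{\idx-1}]$ every \tim satisfies $\tim\le2^{\idx-1}$, so monotonicity now points the other way: $\prob(\opt{\nodesuc}(\tim))\ge\prob(\opt{\nodesuc}(2^{\idx-1}))=\prob(\nodesuc_{\idx}^{*})$. Since the block has length $2^{\idx-2}$, summing gives $\costopt\ge\sum_{\idx\ge0}2^{\idx-2}\prob(\nodesuc_{\idx}^{*})=\frac{1}{2}\sum_{\idx\ge0}2^{\idx-1}\prob(\nodesuc_{\idx}^{*})$, the second claimed inequality.

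The part I expect to require the most care is not any individual estimate --- each block estimate is a one-line use of monotonicity --- but aligning the block endpoints and evaluation points so that the constants and index shifts match the statement exactly: in particular, for the lower bound one must use blocks $[2^{\idx-2},2^{\idx-1}]$ that lie entirely \emph{below} the evaluation time $2^{\idx-1}$ defining $\nodesuc_{\idx}^{*}$, so that the monotonicity inequality has the right orientation, and one must check that the integral-cost bookkeeping (rounding \factor and the block lengths to integers, and the degenerate case of an already-classified root) only costs an additive $\bigO(1)$ that is absorbed by the $+\factor$ slack. Once this lemma is in hand, combining it with Lemma~\ref{lemma:Zk} --- using $\gs_{\idx}\le\gs^\infty_\idx$ to obtain $\prob(\nodesuc_{\idx+1})\le0.2\,\prob(\nodesuc_{\idx})+3\,\prob(\nodesuc_{\idx+1}^{*})$ --- and a routine geometric-series manipulation gives $\costalg=\bigO(\factor)\,\costopt$, which is Theorem~\ref{thm:main}.
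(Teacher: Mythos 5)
Your proof is correct and follows essentially the same route as the paper, which defers to Lemma~2.2 of \citet{navidi2020adaptive} and sketches exactly this argument: write the expected cost as the tail integral of $\prob(\nodesuc(\tim))$, then bound each geometric time block using monotonicity from its left endpoint (for \treealg) or its right endpoint (for \treeopt). Your accounting of the block lengths, index alignment, and the integral-cost boundary cases matches the intended argument, so nothing further is needed.
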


We are now ready to prove our main result, Theorem~\ref{thm:main}, 
stated in Section~\ref{section:analysis}.
The proof relies on combining the results of Lemma \ref{lemma:Zk} 
with the upper and lower bounds provided by Lemma \ref{lemma:bounds}.

\begin{proof}(Theorem~\ref{thm:main})~~
From Lemma \ref{lemma:Zk}, we obtain
\[
\left(\prob(\nodesuc_{\idx+1}) - 3 \prob({\nodesuc_{\idx+1}^{*}})\right) \factor/3 ~\le~ 
\gs_\idx ~\le~ \gs^\infty_\idx ~\le~ \prob(\nodesuc_{\idx}) \factor/15.
\]

By rearranging terms, we get 
\[
\prob(\nodesuc_{\idx+1}) ~\le~ 0.2\, \prob(\nodesuc_{\idx}) + 3\, \prob({\nodesuc_{\idx+1}^{*}}).
\]

Define $Q=\factor \sum_{\idx\ge0} 2^\idx \prob(\nodesuc_\idx) + \factor$, i.e., the upper bound of \costalg.
We have 
\begin{align*}
Q 
&= \factor \sum_{\idx\ge1} 2^\idx \prob(\nodesuc_\idx) + \factor\left(\prob(\nodesuc_0)+1\right) \\
&\le \factor \sum_{\idx\ge1} 2^\idx \left( 0.2\, \prob(\nodesuc_{\idx-1}) + 3\, \prob({\nodesuc_{\idx}^{*}})\right) + \factor\left(\prob(\nodesuc_0)+1\right) \\
&\le \factor \sum_{\idx\ge0} 2^{\idx}\, 0.4\, \prob(\nodesuc_{\idx}) 
+ \factor \frac 12\, \sum_{\idx\ge1} 2^{\idx-1}\, 12\,\prob({\nodesuc_{\idx}^{*}}) 
+ \factor\left(\prob(\nodesuc_0)+1\right) \\
&= \factor \sum_{\idx\ge0} 2^{\idx}\, 0.4\, \prob(\nodesuc_{\idx}) 
+ \factor \frac 12 \sum_{\idx\ge0} 2^{\idx-1}\, 12\,\prob({\nodesuc_{\idx}^{*}}) - 3\,\factor\,\prob({\nodesuc_{0}^{*}})
+ \factor\left(\prob(\nodesuc_0)+1\right) \\
&\le \factor \sum_{\idx\ge 0} 2^{\idx}\, 0.4 \prob(\nodesuc_{\idx}) 
+ \factor \frac 12\, \sum_{\idx\ge 0} 2^{\idx-1}\, 12\,\prob({\nodesuc_{\idx}^{*}}) \\
&\le 0.4\, Q + 12\,\factor\, \costopt,
\end{align*}
where we note that $\prob(\opt{\nodesuc_{0}})=1$ and $\prob(\nodesuc_0)\le1$.
Together with Lemma \ref{lemma:bounds}, we obtain 
\[
\costalg ~\le~ Q ~\le~ \frac{12}{0.6} \, \factor\, \costopt ~=~ 20\, \factor\, \costopt.
\]
\end{proof}

\note{Precisely, $300 (1 + \log 1/\minprob + \log \nobjs + \paramfimp \minimp)$-approximation.}

\section{Experimental evaluation}
\label{section:experiments}

In this section, we evaluate the performance of our enhanced decision-tree algorithms by comparing them against strong baselines on a large collection of real-world datasets.
Some additional experimental results are presented in the Appendix, 
including 
further experimental results for \cart (Section~\ref{sec:cart}),
\revise{further experimental results on tree size (Section~\ref{sec:treesize}),}
additional statistical tests (Section~\ref{sec:tests}), 
and more visual examples (Section~\ref{sec:visual}).
Our implementation and pre-processing scripts can be found 
in a Github repository.\footnote{\url{https://github.com/Guangyi-Zhang/low-expected-cost-decision-trees}}

\spara{Datasets.}
We evaluate our methods on 20 datasets from the UCI Machine Learning Repository~\citep{UCI} and OpenML~\citep{OpenML2013}.
Information about the datasets is shown in Table \ref{tbl:datasets}. 
We experiment with datasets containing up to 0.6 million objects and 5 thousand features.
We set the limit~\thr to be 0.005 for all datasets except for small ones, whose \thr are set accordingly so that the minimum leaf size is 2.
For all datasets, 70\% of the data points are used for training, 10\% for validation and the rest for testing.
Numerical features are categorized into multiple bins \revise{by the $k$-means strategy, which can adapt to uneven data distributions}.
All categorical features are then binarized \revise{to avoid biases towards features with a large number of levels \citep{strobl2007bias}}.
All identical objects are coalesced into a single object, 
and the sampling probability is set accordingly.
To fulfill the realizability assumption, 
the majority class is assigned to each identical data point in the training set, 
which may have different classes otherwise, due to noise or feature discretization.
Apart from the original datasets with unit test cost, 
we additionally create more challenging scenarios, 
where each test cost is independently drawn from the set $\{1,\ldots,10\}$.

\begin{table}[t]
  \caption{Datasets statistics; 
  $n, \ntests, \nclss$: number of data points, binary features and classes.
  Numerical features are categorized into 5 bins by the $k$-means strategy.}
  \label{tbl:datasets}
  \centering
\begin{small}
\begin{tabular}{lrrr}
\toprule
Dataset & \multicolumn{1}{c}{$n$} 
        & \multicolumn{1}{c}{\ntests}
        & \multicolumn{1}{c}{\nclss} \\
\midrule
iris              & 150    & 20   & 3 \\
ilpd              & 583    & 46   & 2 \\
breast-w          & 699    & 45   & 2 \\
tic-tac-toe       & 958    & 27   & 2 \\
obesity           & 2\,111   & 58   & 7 \\
bioresponse       & 3\,751   & 5\,333 & 2 \\
spambase          & 4\,601   & 285  & 2 \\
phoneme           & 5\,404   & 25   & 2 \\
musk              & 6\,598   & 830  & 2 \\
speed-dating      & 8\,378   & 733  & 2 \\
phishing-websites & 11\,055  & 46   & 2 \\
shoppers          & 12\,330  & 454  & 2 \\
letter            & 20\,000  & 80   & 26 \\
default           & 30\,000  & 112  & 2 \\
bank-marketing    & 45\,211  & 76   & 2 \\
electricity       & 45\,312  & 42   & 2 \\
firewall          & 65\,532  & 55   & 4 \\
dota2             & 92\,649  & 394  & 2 \\
diabetic          & 101\,766 & 264  & 3 \\
covertype         & 581\,012 & 94   & 7 \\
\bottomrule
\end{tabular}
\end{small}
\end{table}

\spara{Algorithms and baselines.}
\revise{A summary of the algorithms is displayed in Table~\ref{tbl:algs}.}
The algorithms that implement the proposed approach are denoted as  
\emph{enhanced \cfourfive} (\ecfourfive) and \emph{enhanced \cart} (\ecart).
Baselines include the following:
\begin{itemize}
\item The \asr method \citep{navidi2020adaptive}, 
which is the greedy algorithm without the newly-introduced impurity-reduction term.
\item Impure Pairs (\ip), which maximizes the reduction in the number of impure pairs at each split, 
i.e., the unweighted edge cut among different classes \citep{golovin2010near,cicalese2014diagnosis}.
\item \bal, which is an unsupervised balanced-tree algorithm that greedily selects the test that splits the current node most evenly. 
\item The two traditional algorithms \cfourfive and \cart, 
and their cost-benefit versions that select a test using a cost-weighted criterion (denoted with a prefix `C').
\end{itemize}

\begin{table}[t]
  \caption{\revise{Summary of competing algorithms.
  Prefix `C-' indicates a variant with a cost-weighted criterion.
  Prefix `p-' indicates a variant with post-pruning.
  }}
  \label{tbl:algs}
  \centering
\begin{small}
\begin{tabular}{ll}

\toprule
Algorithm & Brief description \\
\midrule
\asr              &greedy without impurity reduction \citep{navidi2020adaptive} \\
\ip              &greedy in reducing the number of impure pairs \citep{golovin2010near} \\
\bal              &greedy in the most balanced split \\
{[p]}[C]\cart             &traditional \cart \citep{breiman1984classification}\\
{[p]}[C]\cfourfive        &traditional \cfourfive \citep{quinlan1993c4.5} \\
{[p]}\ecart               &enhanced \cart  (proposed method) \\
{[p]}\ecfourfive          &enhanced \cfourfive (proposed method) \\

\bottomrule
\end{tabular}
\end{small}
\end{table}

To ensure a meaningful comparison, we measure performance for all methods based on the same stopping criteria.
All algorithms perform two-way splitting.
Splitting of tree nodes stops if homogeneity is achieved or if the minimum-probability limit is reached.
We examine the performance of \cfourfive and \cart with post-pruning (denoted with a prefix `p') or without.
We adopt the standard \emph{minimal cost-complexity pruning} approach \citep{breiman1984classification}, 
which prunes a tree node having many leaves  
if its impurity is no much larger than the total impurity of its leaves.
The parameter that controls the stringency of the pruning 
is determined by cross-validation over a logspace from $10^{-5}$ to 1.

\begin{figure}[t]
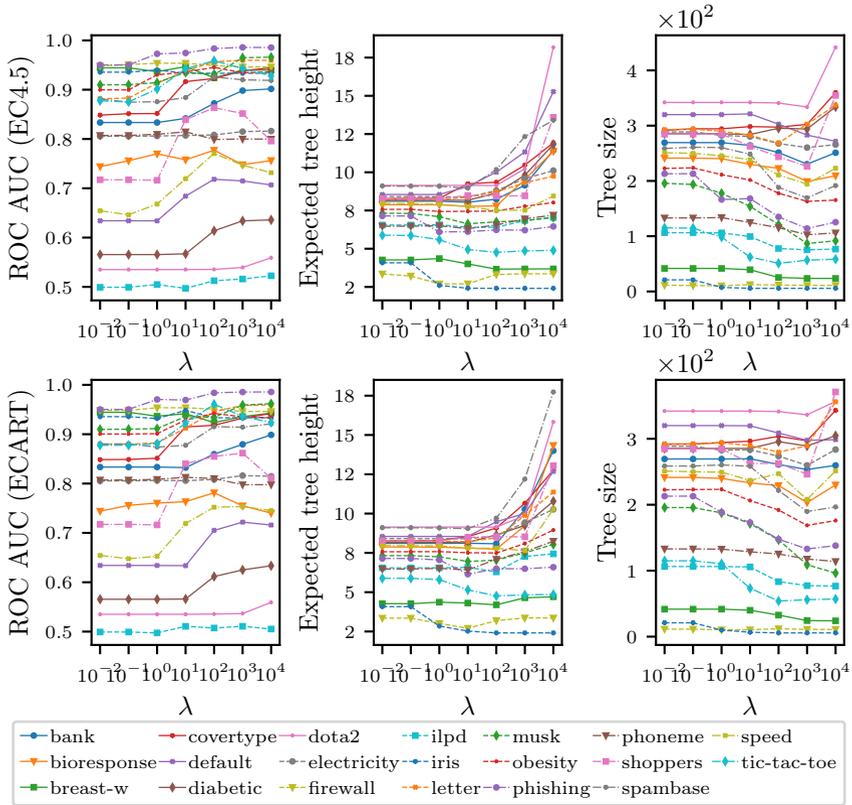

    \centering
	\pgfinput{tradeoff}
    \caption{\label{fig:tradeoff}The effect of the trade-off hyperparameter $\paramfimp$.
    \revise{ROC AUC is on validation data, and expected tree height or tree size on training data.}}
\end{figure}

The only hyperparameter in our algorithm (\paramfimp) controls the trade-off between complexity and discrimination.
The effect of \paramfimp is summarized in Figure \ref{fig:tradeoff}.
For large values of \paramfimp
our algorithms turn into the tra\-di\-tional tree-induction algorithms \cfourfive and \cart; 
on the other hand, if \paramfimp is zero, our algorithms turn into the greedy algorithm for the \asr problem.
As we are working with a bi-criteria optimization problem, 
there is no golden rule in deciding the best value of the hyperparameter.
In this experiment, we aim to decide a value of \paramfimp that preserves comparable accuracy while reduces the complexity as much as possible.
Thus, we tune the hyperparameter \paramfimp by starting with a large \paramfimp and gradually decreasing it before a significant drop (larger than 1\%) is seen in the predictive accuracy over the validation set.
Note also that \paramfimp is invariant to the data size, 
as the greedy score only depends on the distributions before and after the split.

\begin{figure}[t]
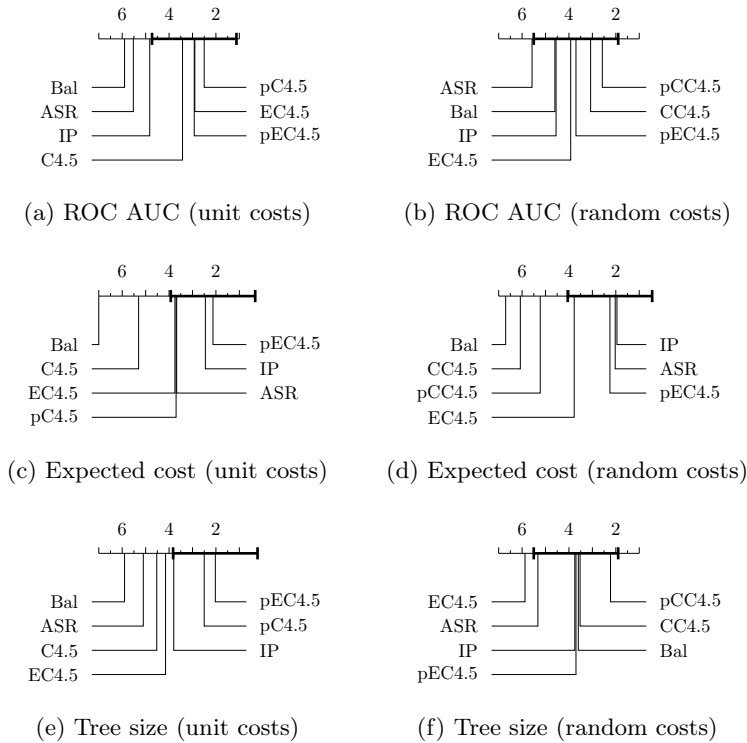

    \centering
    \subcaptionbox{\rocauc (unit costs)}{
	    \tikzinput{cd-bonferroni-auc-unit-c45-005}
    }
    \hskip -5ex
	\subcaptionbox{\rocauc (random costs)}{
        \tikzinput{cd-bonferroni-auc-random-c45-005}
    }
    \subcaptionbox{Expected cost (unit costs)}{
    	\tikzinput{cd-bonferroni-cost-unit-c45-005}
    }
    \hskip -5ex
    \subcaptionbox{Expected cost (random costs)}{
        \tikzinput{cd-bonferroni-cost-random-c45-005}
    }
    \subcaptionbox{\revise{Tree size (unit costs)}}{
	    \tikzinput{cd-bonferroni-size-unit-c45-005}
    }
    \hskip -5ex
    \subcaptionbox{\revise{Tree size (random costs)}}{
		\tikzinput{cd-bonferroni-size-random-c45-005}
    }
    \caption{\label{fig:cd:c45}Critical difference for the Bonferroni-Dunn test on significance level $\alpha=0.05$ for average ranks of algorithms among 20 tested datasets.
    Methods closer to the right end have a better rank.
    The method that is compared with other methods is p\ecfourfive, and methods lying outside the thick interval are significantly different from p\ecfourfive.
    }
\end{figure}

\begin{figure}[t]
    \centering
    \subcaptionbox{ROC AUC (unit costs)}{
        \tikzinput{cd-bonferroni-auc-unit-cart-005}
    }
    \hskip -5ex
    \subcaptionbox{ROC AUC (random costs)}{
        \tikzinput{cd-bonferroni-auc-random-cart-005}
    }
    \subcaptionbox{Expected cost (unit costs)}{
        \tikzinput{cd-bonferroni-cost-unit-cart-005}
    }
    \hskip -5ex
    \subcaptionbox{Expected cost (random costs)}{
        \tikzinput{cd-bonferroni-cost-random-cart-005}
    }
    \subcaptionbox{\revise{Tree size (unit costs)}}{
        \tikzinput{cd-bonferroni-size-unit-cart-005}
    }
    \hskip -5ex
    \subcaptionbox{\revise{Tree size (random costs)}}{
        \tikzinput{cd-bonferroni-size-random-cart-005}
    }
    \caption{\label{fig:cd:cart}Critical difference for the Bonferroni-Dunn test on significance level $\alpha=0.05$ for average ranks of algorithms among 20 tested datasets.
    Methods closer to the right end have a better rank.
    The method that is compared with other methods is p\ecart, and methods lying outside the thick interval are significantly different from p\ecart.
    }
\end{figure}

\begin{figure}[t]
    \centering
	\pgfinput{comparison-C45-unit}
    \caption{\label{fig:comparison:unitcost:c45}Performance results with unit test costs.
    All plots in the same row share the same x- and y-axes.
    Error bars are also shown.}
\end{figure}

\spara{Results.}
We evaluate all methods using \rocauc as a measure of predictive power,
expected cost as a measure of tree complexity,
\revise{and tree size (i.e., the number of tree nodes) as an auxiliary measure of global tree complexity.
A full result on tree size is deferred to Section~\ref{sec:treesize} in Appendix.}
Reported results, shown in Figure~\ref{fig:comparison:unitcost:c45},  
are averages over 5 executions with random train-test splits.
We conduct the Bonferroni-Dunn test with significance level $\alpha=0.05$ for average ranks~\citep{demvsar2006statistical},
and report the \emph{critical difference diagram} in Figure~\ref{fig:cd:c45} and \ref{fig:cd:cart},
where the proposed method p\ecfourfive (or p\ecart) is tested against the other methods, and methods closer to the right end have a better rank.
We see that the predictive power and tree complexity of p\ecart and p\ecfourfive are statistically not significantly different from the respective best performer, while it is significantly better than most other baselines.
Two methods \cfourfive and \cart lead to similar behavior; we focus on \cfourfive below and discuss its results in details.
Full results for \cart and its enhancements are presented in the Appendix, Section~\ref{sec:cart}.

It can be seen that post-pruning has a noticeable positive effect on both the accuracy and complexity for the \cfourfive algorithm.
However, even after post-pruning, p\cfourfive is still ranked closely to un-pruned \ecfourfive in terms of the expected cost, and in some datasets, the expected cost of p\cfourfive is about two times larger than that of \ecfourfive in Figure~\ref{fig:comparison:unitcost:c45}.
This is reasonable because post-pruning mainly removes tree nodes near the bottom, but fails to rescue early bad splits near the root.
\revise{On the other hand, post-pruning is significantly more beneficial than impurity reduction for the global tree size.}
Also note that post-pruning has less effect on \ecfourfive in terms of accuracy, which indicates that un-pruned \ecfourfive alone is robust to overfitting.

The decision tree produced by \bal is the worst in both aspects.
This is expected for predictive power as \bal is an unsupervised method, but it is quite surprising for complexity.
It turns out that \bal often has to keep expanding a balanced tree until the minimum leaf size is reached, as tree nodes rarely achieve homogeneity.
This behavior reinforces the argument that discriminative tests help accelerating termination and reducing expected cost.

The \ip algorithm achieves better performance in both aspects than the \asr algorithm.
However, \ip has a too strong bias towards a balanced split, that it favors a random test over a discriminative one in the example we provide in Section~\ref{section:supplementary:asr-splits}.
This bias is also reflected in Figure~\ref{fig:comparison:unitcost:c45} where it falls behind \ecart by more than 10\% accuracy in some datasets.
By further statistical tests we conduct in Section~\ref{sec:tests},  
the predictive power of \asr and \ip are statistically indistinguishable from the unsupervised \bal.

Finally, regarding running time, 
algorithm \ecfourfive typically runs 3-4 times longer than \cfourfive, 
but there are instances that the latter algorithm constructs very skewed trees and it takes more time to complete  (details in Appendix, Section~\ref{sec:time}).

\begin{figure}[t]
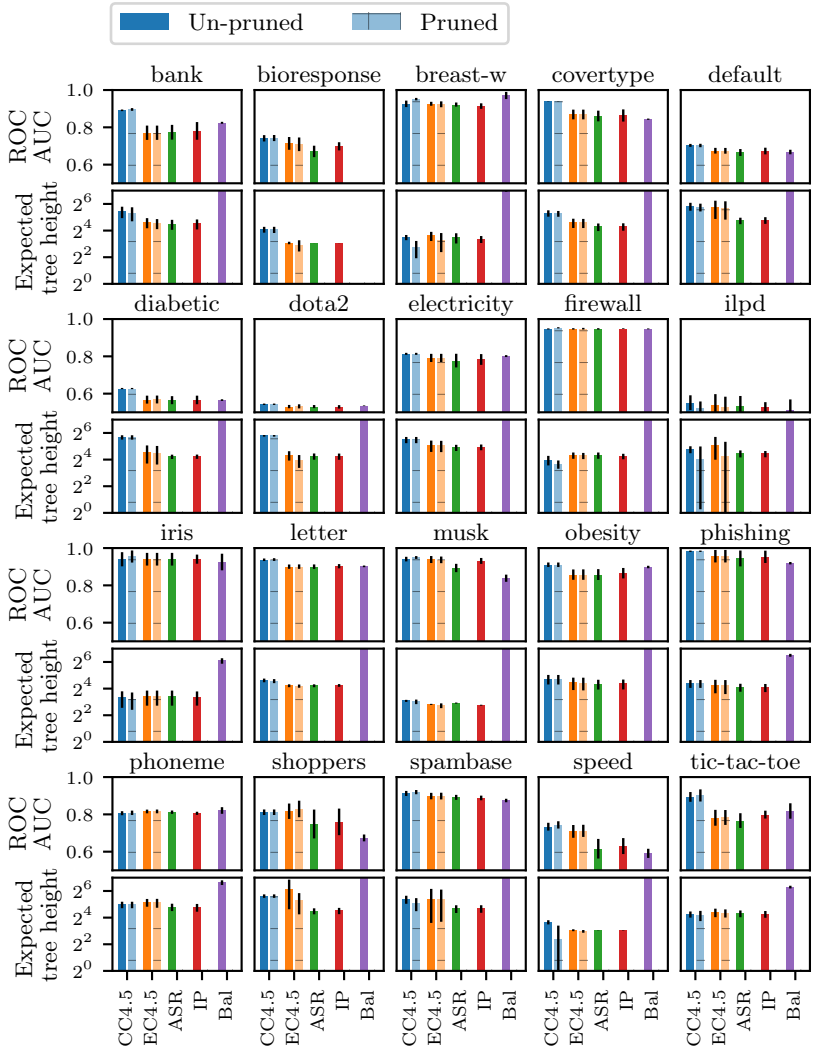

    \centering
    \pgfinput{comparison-C45-random}
    \caption{\label{fig:comparison:nonuniform:c45}Performance results with non-uniform test costs.}
\end{figure}

The benefit of the proposed method becomes more pronounced 
in non-uniform-cost scenarios, shown in Figure \ref{fig:comparison:nonuniform:c45}.
It turns out that the cost-benefit traditional trees fail to reduce the expected cost, which indicates the need for more sophisticated techniques like ours to tackle non-uniform costs.
Our algorithms obtain comparable predictive power, while achieving up to 90\% lower expected cost than the traditional trees.

We conclude that \revise{our enhancement, given in the form of a regularizer}, strikes an excellent balance between predictive power and expected tree height.
\section{Conclusion}\label{section:conclusion}

In this paper, we proposed a novel algorithm to construct a general decision tree 
with asymptotically tight approximation guarantee on expected cost under mild assumptions.
The algorithm can be used to assimilate many existing standard impurity functions
so as to enhance their corresponding splitting criteria with a complexity guarantee. 
Through empirical evaluation on various datasets and scenarios, we verified the effectiveness of our algorithm both in terms of accuracy and complexity.
Potential future directions include the study of different complexity measures, 
further termination criteria, 
and incorporating a broader family of impurity functions.


\backmatter

%
%
%

\bmhead{Acknowledgments}
%
This research is supported by the Academy of Finland projects AIDA (317085) and MLDB (325117),
the ERC Advanced Grant REBOUND (834862), 
the EC H2020 RIA project SoBigData++ (871042), 
and the Wallenberg AI, Autonomous Systems and Software Program (WASP) 
funded by the Knut and Alice Wallenberg Foundation.

\section*{Declarations}


\begin{itemize}
\item Funding: 
This research is supported by the Academy of Finland projects AIDA (317085) and MLDB (325117),
the ERC Advanced Grant REBOUND (834862),
the EC H2020 RIA project ``SoBigData++'' (871042), and the
Wallenberg AI, Autonomous Systems and Software Program (WASP).
The funders had no role in study design, data collection and analysis, 
decision to publish, or preparation of the manuscript. 
\item Competing interests: 
The authors have no conflicts of interest to declare that are relevant to the content of this article.
\item Ethics approval: Not applicable
\item Consent to participate: Not applicable
\item Consent for publication: Not applicable
\item Availability of data and materials: 
All datasets we use are publicly available in the UCI Machine Learning Repository~\citep{UCI} and OpenML~\citep{OpenML2013}.
\item Code availability: 
Our implementation is publicly available in the Github repository.%
\footnote{\url{https://github.com/Guangyi-Zhang/low-expected-cost-decision-trees}}
\item Authors'~contributions: 
Guangyi Zhang is responsible for the theoretical and experimental development.
Both authors contribute significantly to the design and writing of the work.
\end{itemize}

%
%
%
%
%

\begin{appendices}


\section{Split criteria for adaptive submodular ranking (ASR) method may lead to non-discriminative decision trees}
\label{section:supplementary:asr-splits}

\begin{figure}[H] 
\centering
\subcaptionbox{\label{fig:impurepair:a}A discriminative split}{
	\picinput[width =.33\textwidth]{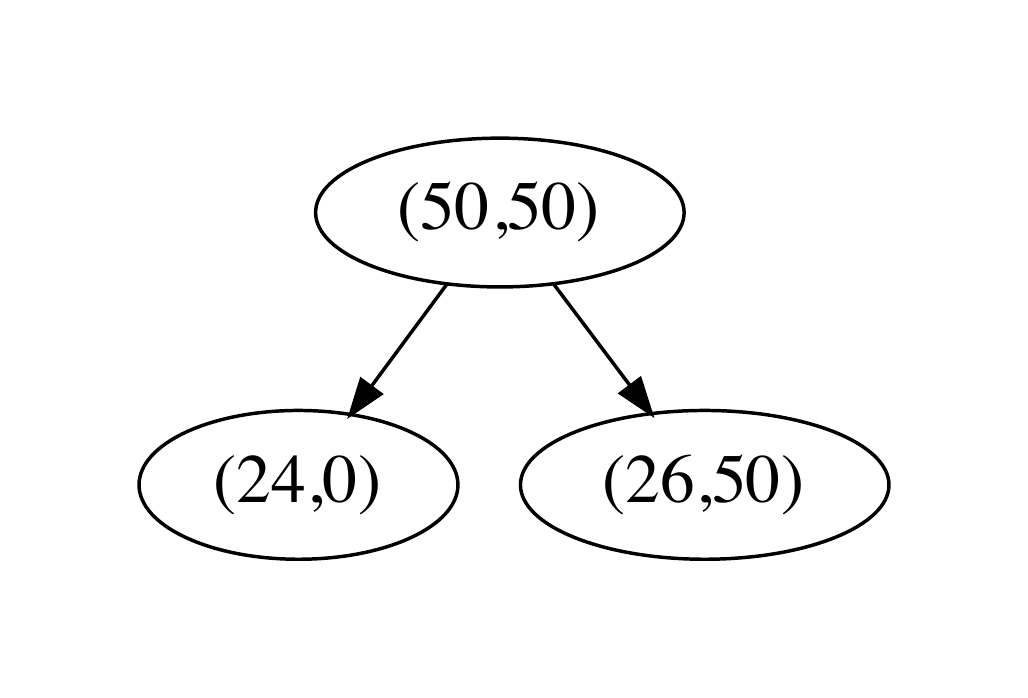}
   	}
\quad
\subcaptionbox{\label{fig:impurepair:b}A non-discriminative split}{
   	\picinput[width =.33\textwidth]{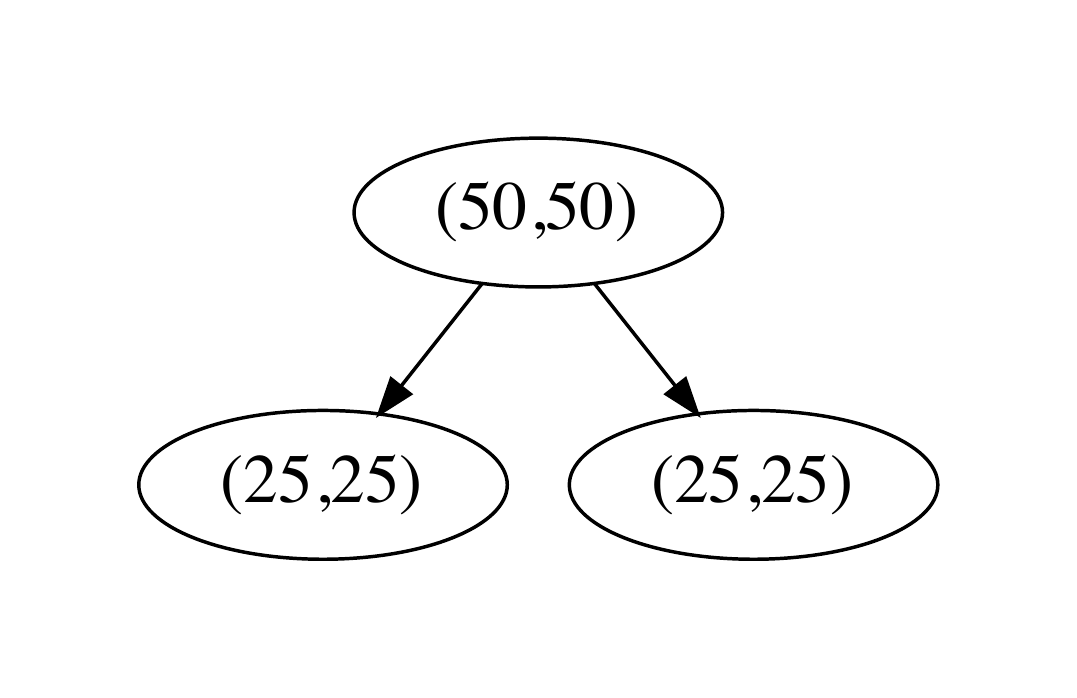}
	}
\caption{\label{fig:impurepair}
A simple example demonstrating that \asr split criteria may lead to non-discriminative decision trees.}
\end{figure}

We present a simple example demonstrating that the \asr method, 
used for the group identification problem where the aim is to minimize the expected cost of a decision tree,
may not select discriminative splits, 
which in turn may lead to decision trees with low predictive power.
The splitting criterion of \asr consists of two terms (see Section~\ref{section:algorithm} for more details).
One term encourages balanced partitions and does not use label information.
For the other term, two criteria have been considered in the literature:
(1) maximize reduction in the number of \emph{heterogeneous pairs} of objects after a split, or 
(2) maximize the number of excluded objects in other classes. 
These two criteria turn out to be equivalent up to a constant factor of 2 by a simple double counting.
Apparently, when the number of heterogeneous pairs drops to zero or each object separates from all objects in other classes, 
we obtain perfect accuracy (in training data). 
Note that even random splits can lead to perfect accuracy as long as the tree is fully expanded.
Here we discuss the second term, as random splits actually optimize balance, on expectation.

Our example is shown in Figure~\ref{fig:impurepair}, where we demonstrate two possible splits on a node.
As we will see, \asr favors the non-discriminative split \ref{fig:impurepair:b} over the more discriminative split \ref{fig:impurepair:a}.
We assume two classes, and we write $(x,y)$ to indicate the number of objects from the two different classes
in a tree node.
We now discuss the first criterion.
In the left tree \ref{fig:impurepair:a}, the root, the left, and the right node have 
$50\times50$, $0$, and $26\times50$ heterogeneous pairs, respectively.
In the right tree \ref{fig:impurepair:b}, there are $50\times50$, $25\times25$, and $25\times25$ pairs, respectively.
The left split decreases the number of heterogeneous pairs by $50\times50-0-26\times50=24\times50$.
The right split decreases the number of heterogeneous pairs by $50\times50-25\times25-25\times25=50\times25$.
Thus, the \asr criterion will select the (non-discriminative) split \ref{fig:impurepair:b}.

\section{Impurity reduction is non-negative}
\label{sec:impurity}

By the concavity property of \fimp, it is easy to show that the impurity-reduction function 
$\fdiff(\node,\test)$ is non-negative, for any tree node \node and test \test.
In particular, we have
\begin{align}
\fdiff(\node,\test) & ~=~ \fimp(\node) - \fimp(\node\mid\test) \nonumber\\
& ~=~ \fimp(\node) - \sum_{\val\in[\nval_\test]} \frac{\prob(\node_\test^\val)}{\prob(\node)} \fimp(\node_\test^\val) \nonumber\\
& ~=~ \fimp(\prob_\node) - \sum_{\val\in[\nval_\test]} \frac{\prob(\node_\test^\val)}{\prob(\node)} \fimp(\prob_{\node_\test^\val})\nonumber\\
&~\ge~ \fimp(\prob_\node) - \fimp\!\left(\sum_{\val\in[\nval_\test]} \frac{\prob(\node_\test^\val)}{\prob(\node)} \prob_{\node_\test^\val}\right) \label{inequality:concavity} \\
&~=~ \fimp(\prob_\node) - \fimp(\prob_\node) \nonumber \\
&~=~ 0 \nonumber,
\end{align}
where $\prob_\node$ denotes the class distribution vector of \node, and Inequality~(\ref{inequality:concavity}) is by concavity.

%

\section{Further experimental results for \cart}\label{sec:cart}

\begin{figure}[H]
    \centering
	\pgfinput{comparison-CART-unit}
    \caption{\label{fig:comparison:unitcost:cart}Performance results with unit test costs.
    All plots in the same row share the same x- and y-axes.
    Error bars are also shown.}
\end{figure}
\begin{figure}[H]
    \centering
    \pgfinput{comparison-CART-random}
    \caption{\label{fig:comparison:nonuniform:cart}Performance results with non-uniform test costs.}
\end{figure}

\section{Further experimental results on tree size}\label{sec:treesize}
\begin{figure}[H]
    \centering
	\pgfinput[width=.9\textwidth]{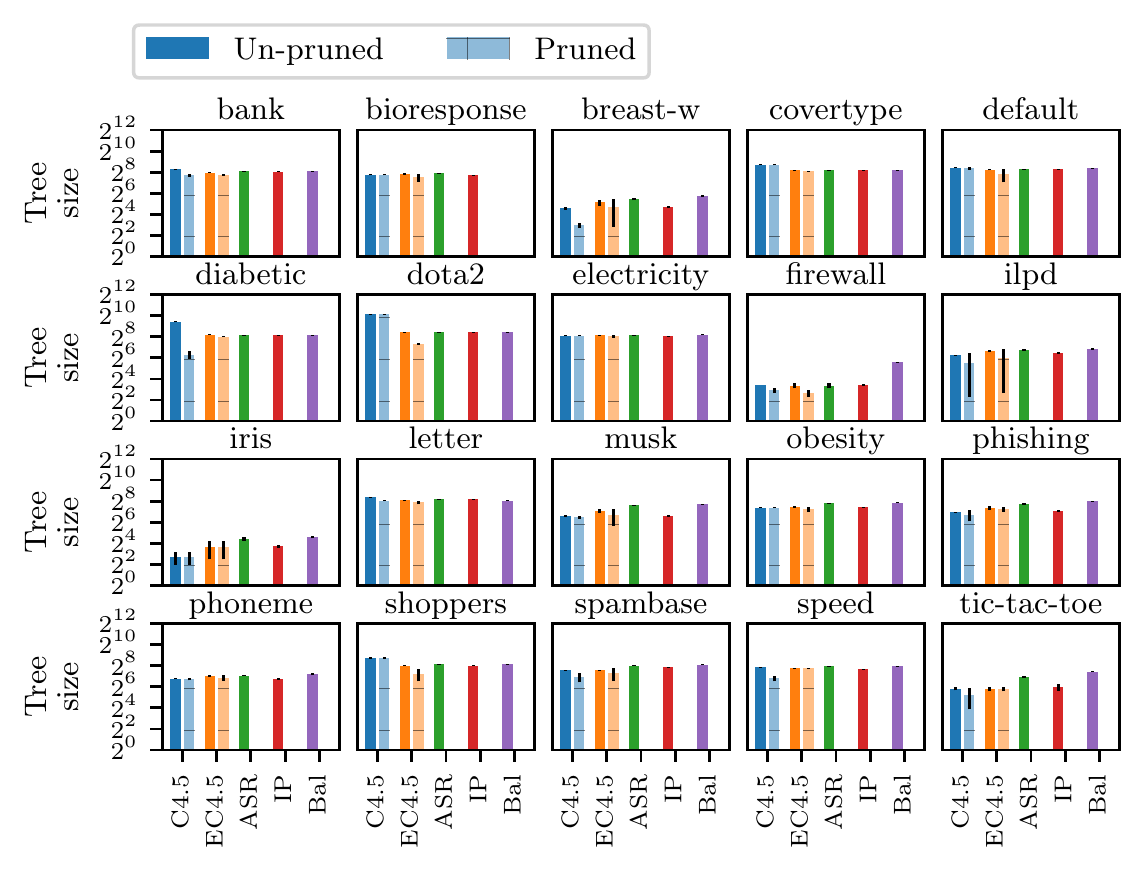}
	\pgfinput[width=.9\textwidth]{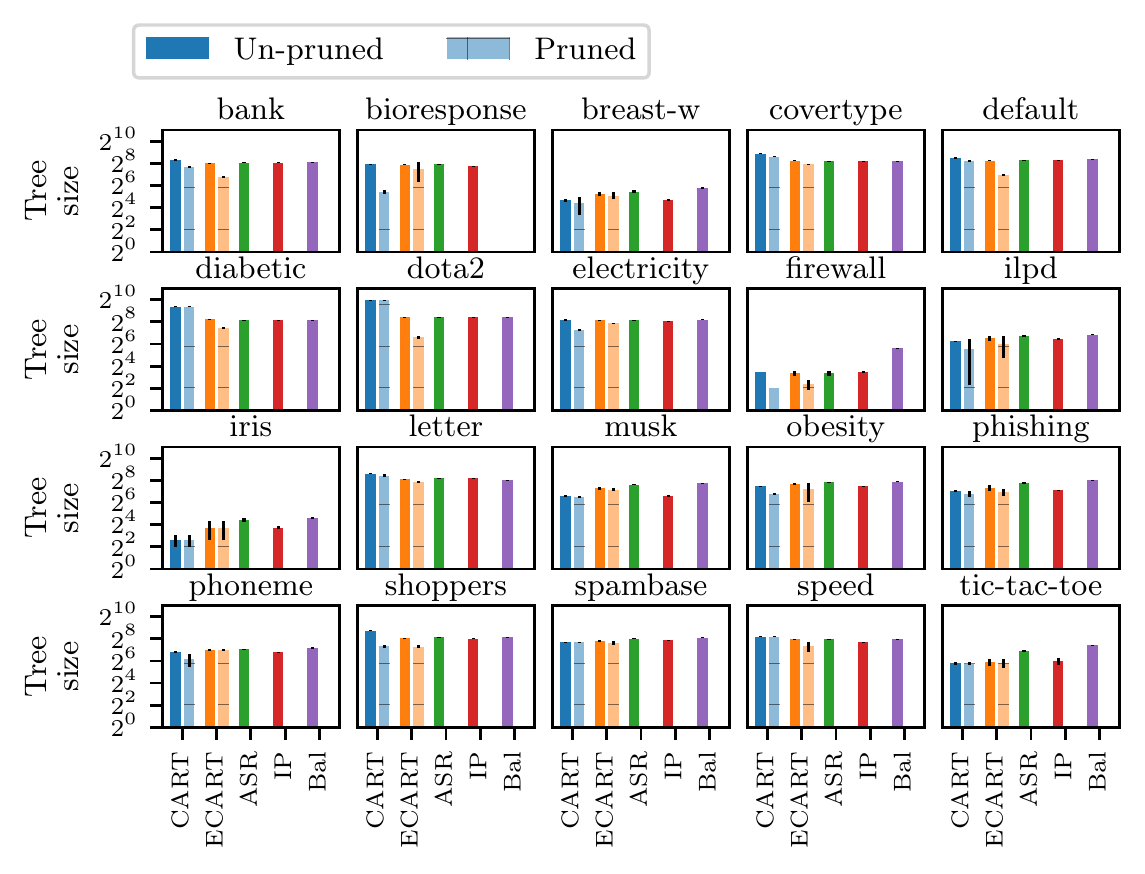}
    \caption{\label{fig:comparison:size:unitcost}Performance results with unit test costs.
    All plots in the same row share the same x- and y-axes.
    Error bars are also shown.}
\end{figure}

\begin{figure}[H]
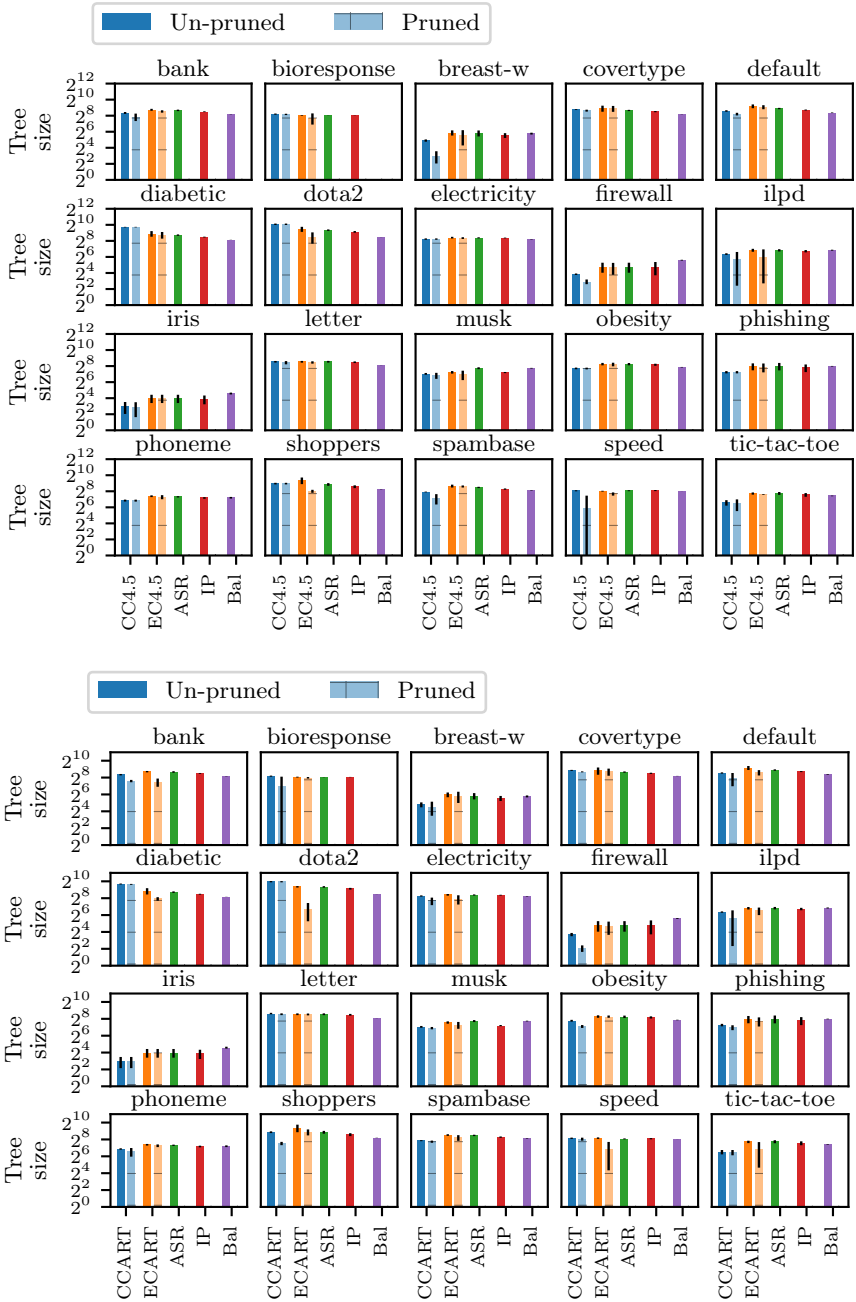

    \centering
    \pgfinput{comparison-size-C45-random}
    \pgfinput{comparison-size-CART-random}
    \caption{\label{fig:comparison:size:nonuniform}Performance results with non-uniform test costs.}
\end{figure}

\section{Further statistical tests: pairwise Nemenyi}\label{sec:tests}

\begin{figure}[H]
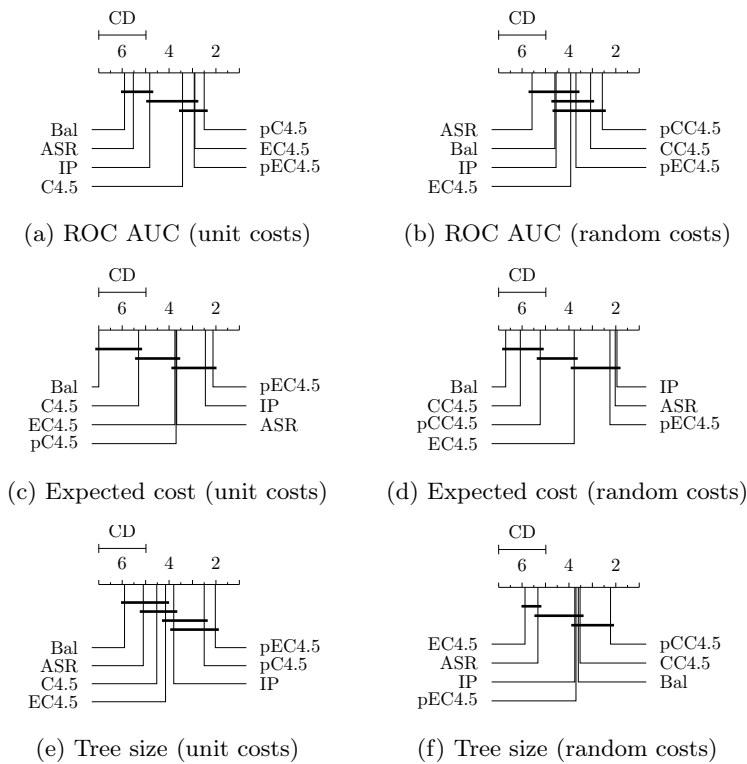

    \centering
    \subcaptionbox{ROC AUC (unit costs)}{
        \tikzinput{cd-nemenyi-auc-unit-c45-005}
    }
    \hskip -5ex
    \subcaptionbox{ROC AUC (random costs)}{
        \tikzinput{cd-nemenyi-auc-random-c45-005}
    }
    \subcaptionbox{Expected cost (unit costs)}{
        \tikzinput{cd-nemenyi-cost-unit-c45-005}
    }
    \hskip -5ex
    \subcaptionbox{Expected cost (random costs)}{
        \tikzinput{cd-nemenyi-cost-random-c45-005}
    }
    \subcaptionbox{Tree size (unit costs)}{
        \tikzinput{cd-nemenyi-size-unit-c45-005}
    }
    \hskip -5ex
    \subcaptionbox{Tree size (random costs)}{
        \tikzinput{cd-nemenyi-size-random-c45-005}
    }
    \caption{\label{fig:cd:nemenyi:c45}Critical difference for the Nemenyi test on significance level $\alpha=0.05$ for average ranks of algorithms among 20 tested datasets.
    Methods closer to the right end have a better rank.
    Any pair of methods which are not connected with an horizontal line have an average rank that is different with statistical significance.
    }
\end{figure}

\begin{figure}[H]
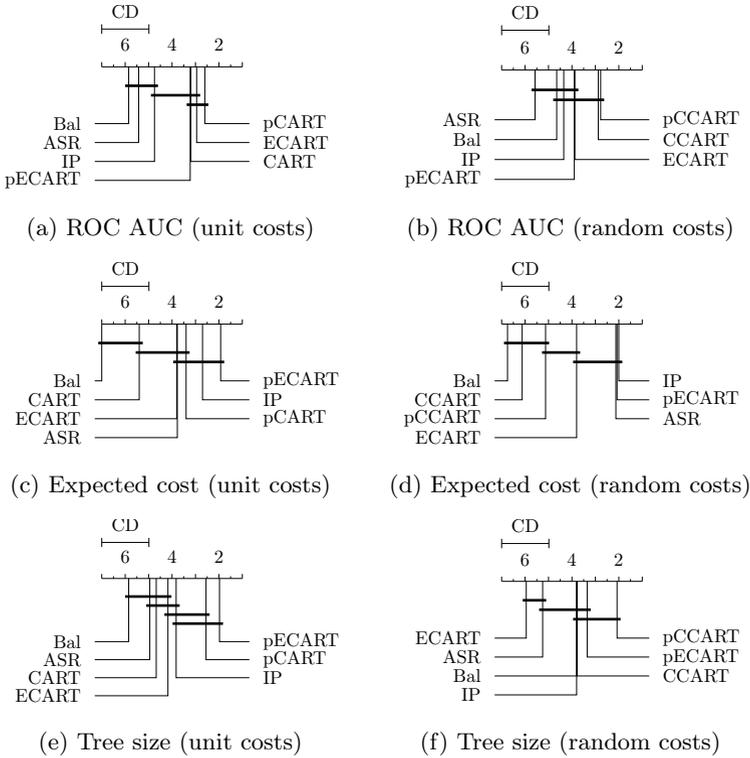

    \centering
    \subcaptionbox{ROC AUC (unit costs)}{
        \tikzinput{cd-nemenyi-auc-unit-cart-005}
    }
    \hskip -5ex
    \subcaptionbox{ROC AUC (random costs)}{
        \tikzinput{cd-nemenyi-auc-random-cart-005}
    }
    \subcaptionbox{Expected cost (unit costs)}{
        \tikzinput{cd-nemenyi-cost-unit-cart-005}
    }
    \hskip -5ex
    \subcaptionbox{Expected cost (random costs)}{
        \tikzinput{cd-nemenyi-cost-random-cart-005}
    }
    \subcaptionbox{Tree size (unit costs)}{
        \tikzinput{cd-nemenyi-size-unit-cart-005}
    }
    \hskip -5ex
    \subcaptionbox{Tree size (random costs)}{
        \tikzinput{cd-nemenyi-size-random-cart-005}
    }
    \caption{\label{fig:cd:nemenyi:cart}Critical difference for the Nemenyi test on significance level $\alpha=0.05$ for average ranks of algorithms among 20 tested datasets.
    Methods closer to the right end have a better rank.
    Any pair of methods which are not connected with an horizontal line have an average rank that is different with statistical significance.
    }
\end{figure}

\section{Running time}
\label{sec:time}
Note that algorithm \bal misses results over some datasets because its running time is too long.

All experiments were carried out on a server equipped with 24 processors of AMD Opteron(tm)
Processor 6172 (2.1 GHz), 62GB RAM, running Linux~2.6.32-754.35.1.el6.x86\_64.
We use Python~3.8.5.

\begin{figure}[H]
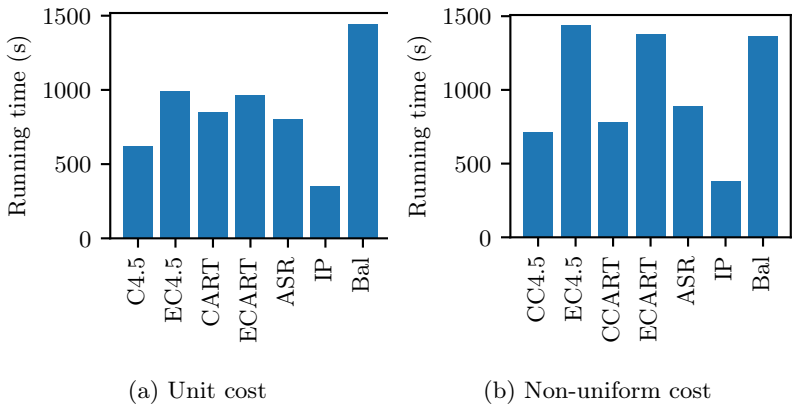

    \centering
    \subcaptionbox{Unit cost}{
        \pgfinput{running-time-unit-avg}
    }
    \hskip -5ex
    \subcaptionbox{Non-uniform cost}{
        \pgfinput{running-time-random-avg}
    }
    \caption{\label{fig:time}Running time, average over all datasets.
    }
\end{figure}

\revise{
We also demonstrate the running time for two selected datasets with unit costs, with a large number of data objects and features, to explore the impact of number of objects and features to the running time.
In general, as reflected in the worst-case time complexity $\bigO(H \ntests \nobjs)$,
the algorithms complete their computation quickly in the case of a large number of data objects (\nobjs), \emph{or} large number of features (\ntests), but not both.
Furthermore, the dependency on \nobjs is slightly worse than on \ntests, as the tree height $H$ typically has a logarithmic dependence on \nobjs.}
\begin{figure}[H]
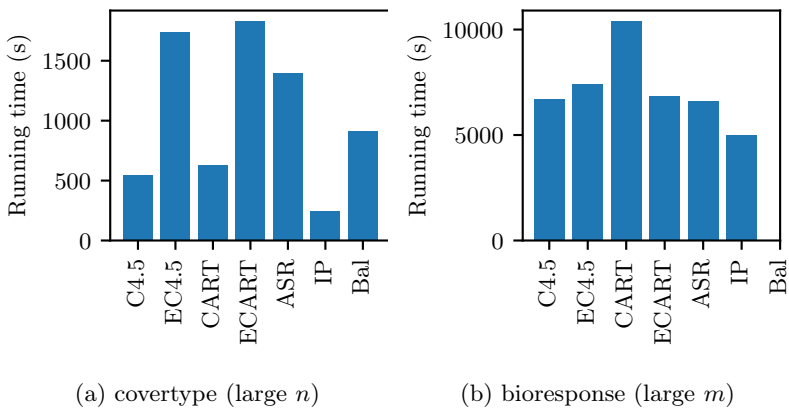

    \centering
    \subcaptionbox{covertype (large \nobjs)}{
        \pgfinput{running-time-unit-covertype}
    }
    \hskip -5ex
    \subcaptionbox{bioresponse (large \ntests)}{
        \pgfinput{running-time-unit-bioresponse}
    }
    \caption{\label{fig:time2}Running time on selected datasets.
    }
\end{figure}

\newpage
\section{Visual examples of real-life datasets}\label{sec:visual}

We visualize datasets that have meaningful features and whose trees are small enough to be contained in the paper.
We also adjust the minimum leaf size \revise{(1\% of the data size)} to produce smaller trees.

\subsection{Visualization of decision trees for shoppers dataset}
\begin{figure}[H]
    \centering
    \subcaptionbox{\cfourfive (AUC ROC 0.841 and expected height 8.9)}{
    	\picinput[width=.7\textwidth, height=.85\textheight, keepaspectratio]{viz_shoppers_C45}
    }
    \caption{\label{fig:viz:speed-dating} Visualization of \cfourfive decision tree for shoppers dataset.
    }
\end{figure}
\begin{figure}[H]
	\ContinuedFloat 
    \centering
    \subcaptionbox{\ecfourfive (AUC ROC 0.862 and expected height 5.49)}{
    	\picinput[width =.99\textwidth]{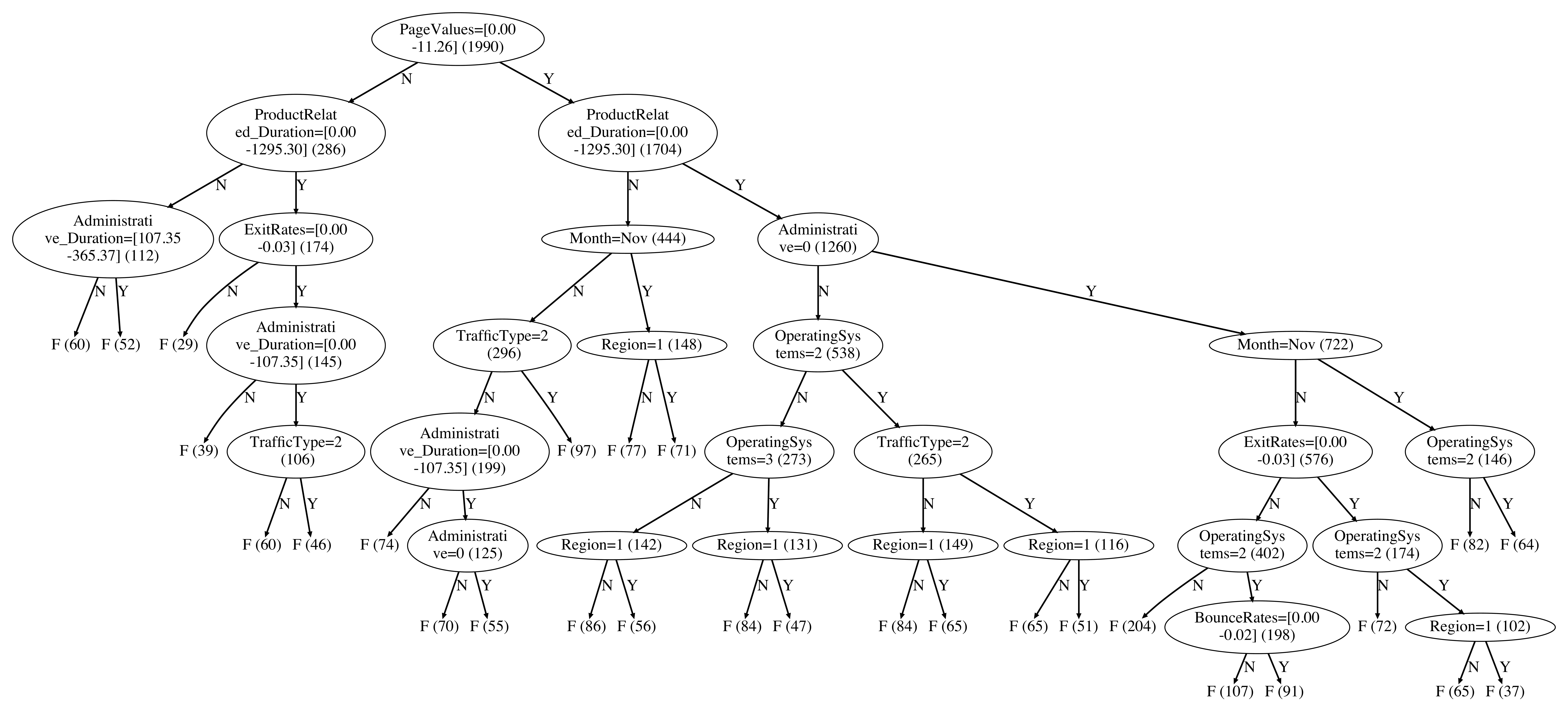}
    }
    \caption{\label{fig:viz:speed-dating} Visualization of \ecfourfive decision tree for shoppers dataset.
    }
\end{figure}
\begin{figure}[H]
	\ContinuedFloat 
    \centering
    \subcaptionbox{\asr (AUC ROC 0.666 and expected height 5)}{
    	\picinput[width =.99\textwidth]{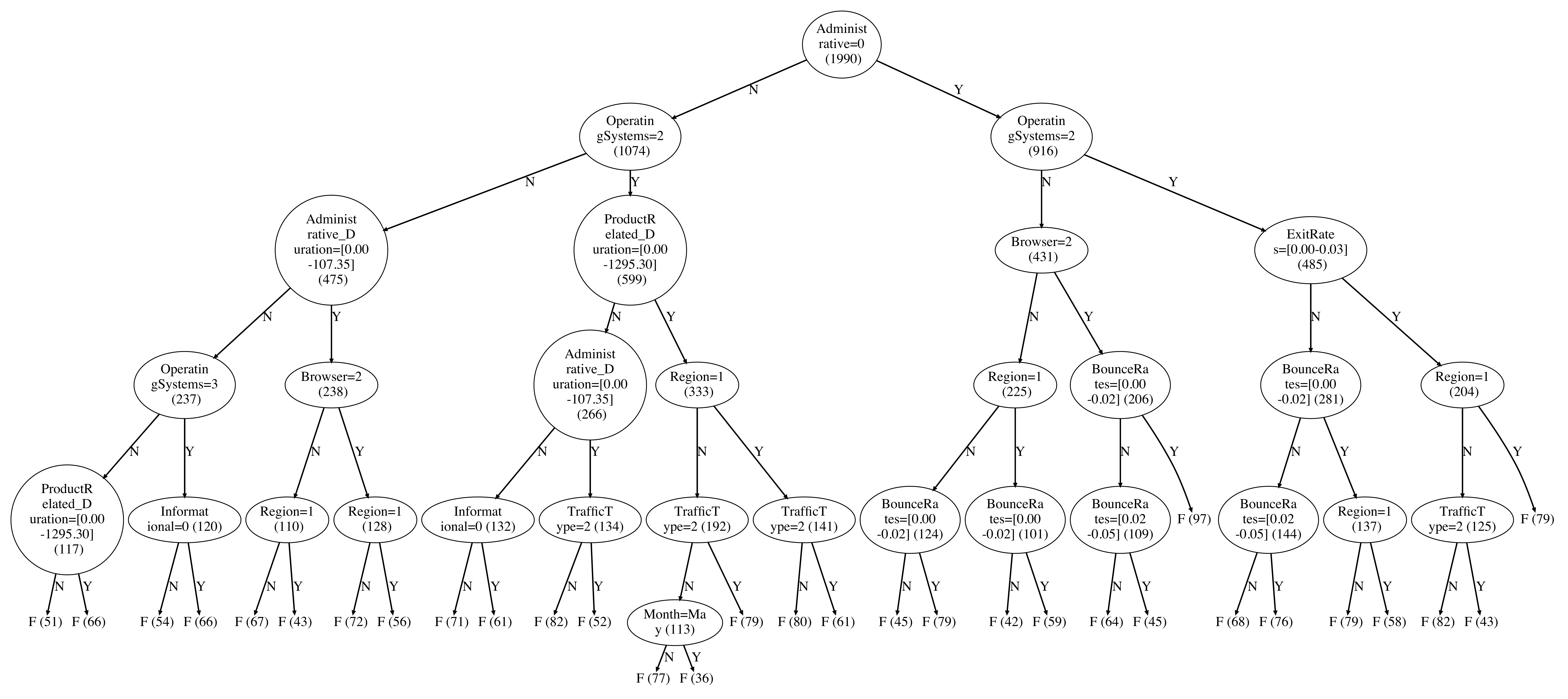}
    }
    \caption{\label{fig:viz:speed-dating} Visualization of \asr decision tree for shoppers dataset.
    }
\end{figure}

\subsection{Visualization of decision trees for breast-w dataset}
\begin{figure}[H]
    \centering
    \subcaptionbox{\cfourfive (AUC ROC 0.968 and expected height 3.5)}{
    	\picinput[width =.7\textwidth]{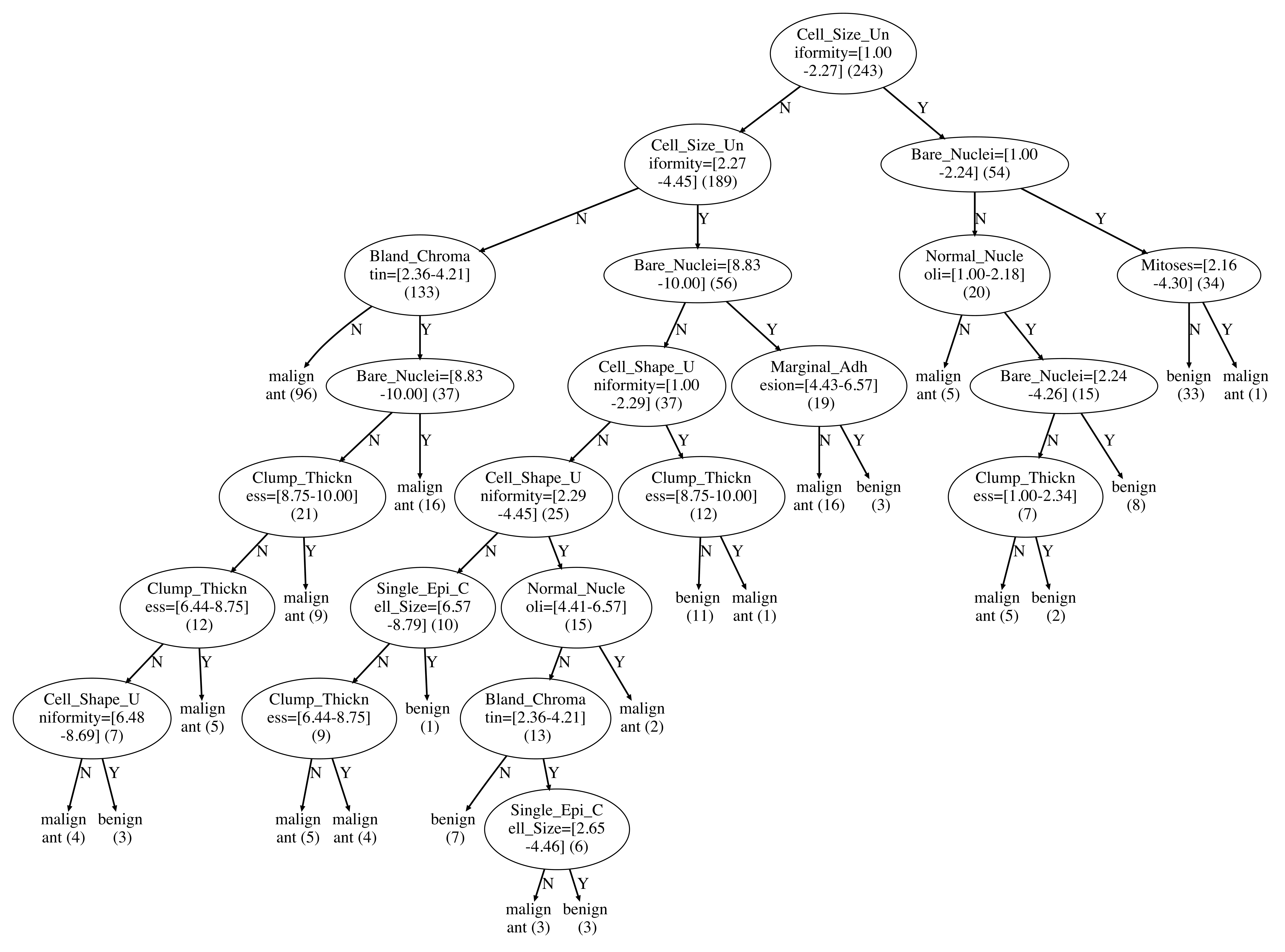}
    }
    \caption{\label{fig:viz:breast} Visualization of \cfourfive decision tree for breast-w dataset.
    }
\end{figure}
\begin{figure}[H]
	\ContinuedFloat 
    \centering
    \subcaptionbox{\ecfourfive (AUC ROC 0.982 and expected height 3.48)}{
    	\picinput[width =.7\textwidth]{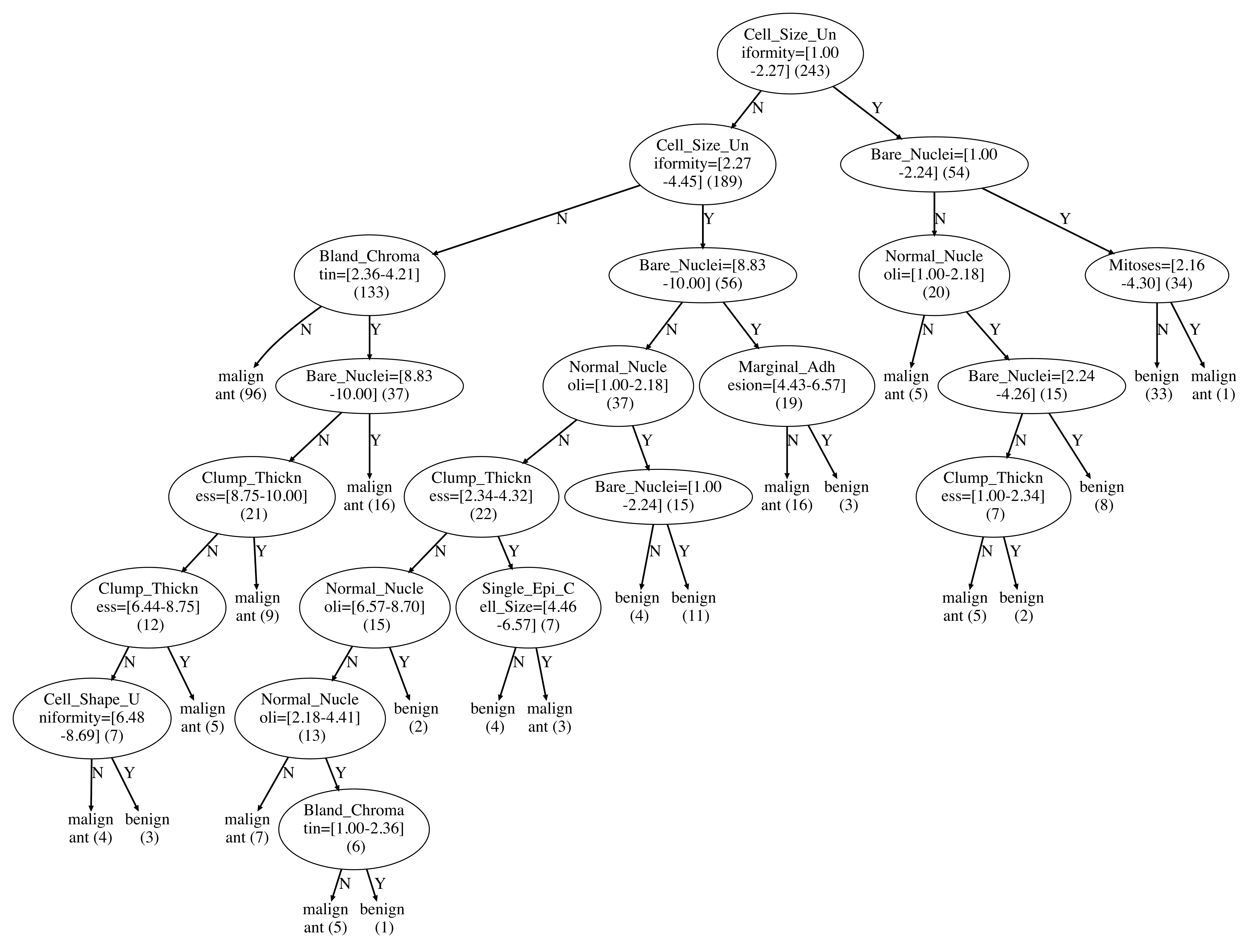}
    }
    \caption{\label{fig:viz:breast} Visualization of \ecfourfive decision tree for breast-w dataset.
    }
\end{figure}
\begin{figure}[H]
	\ContinuedFloat 
    \centering
    \subcaptionbox{\asr (AUC ROC 0.967 and expected height 4.08)}{
    	\picinput[width =.99\textwidth]{viz_breast-w_asr}
    }
    \caption{\label{fig:viz:breast} Visualization of \asr decision tree for breast-w dataset.
    }
\end{figure}

\subsection{Visualization of decision trees for obesity dataset}
\begin{figure}[H]
    \centering
    \subcaptionbox{\cfourfive (AUC ROC 0.932 and expected height 5.7)}{
    	\picinput[width =.99\textwidth]{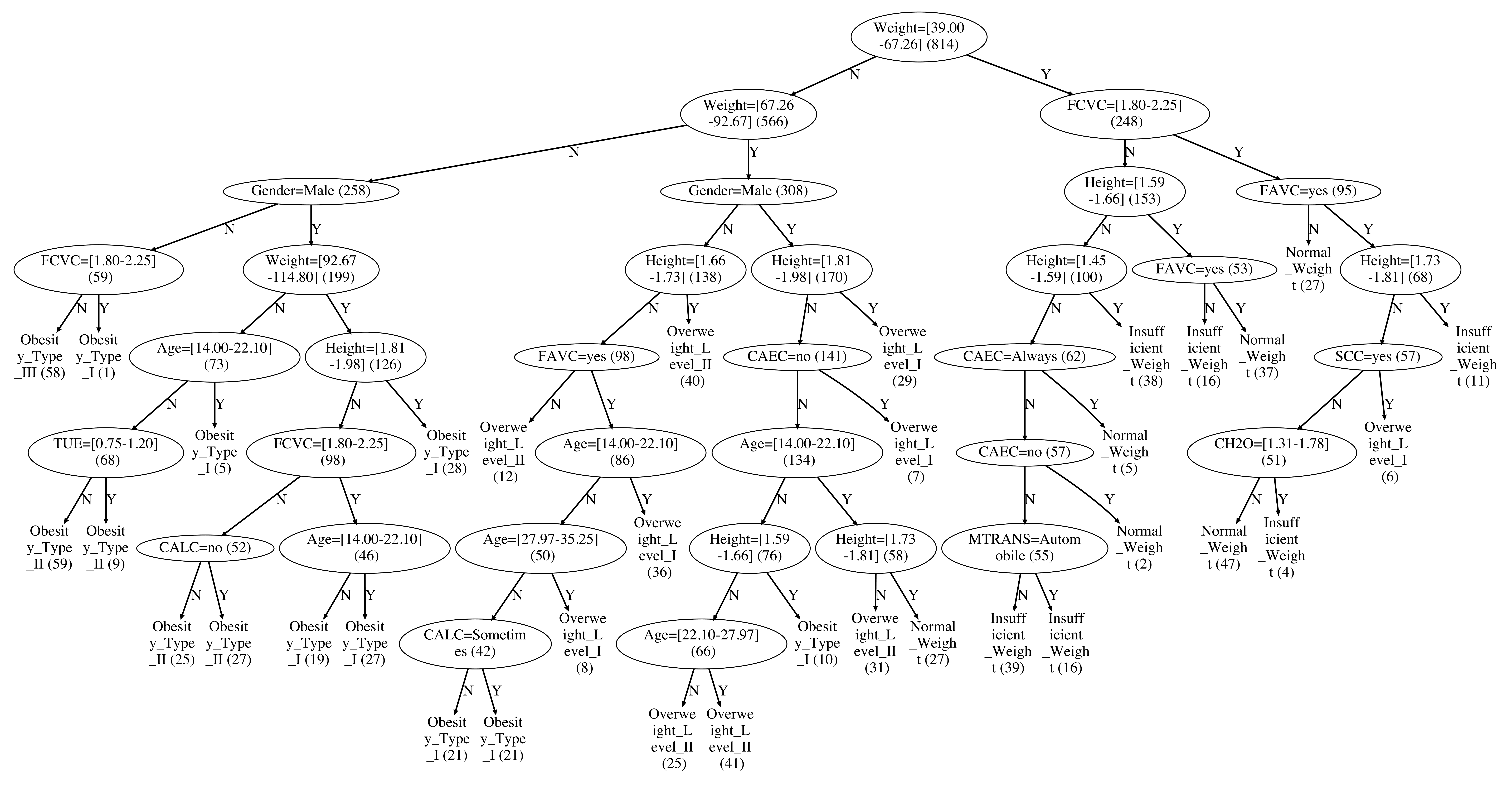}
    }
    \caption{\label{fig:viz:obesity} Visualization of \cfourfive decision tree for obesity dataset.
    }
\end{figure}
\begin{figure}[H]
	\ContinuedFloat 
    \centering
    \subcaptionbox{\ecfourfive (AUC ROC 0.930 and expected height 5.1)}{
    	\picinput[width =.99\textwidth]{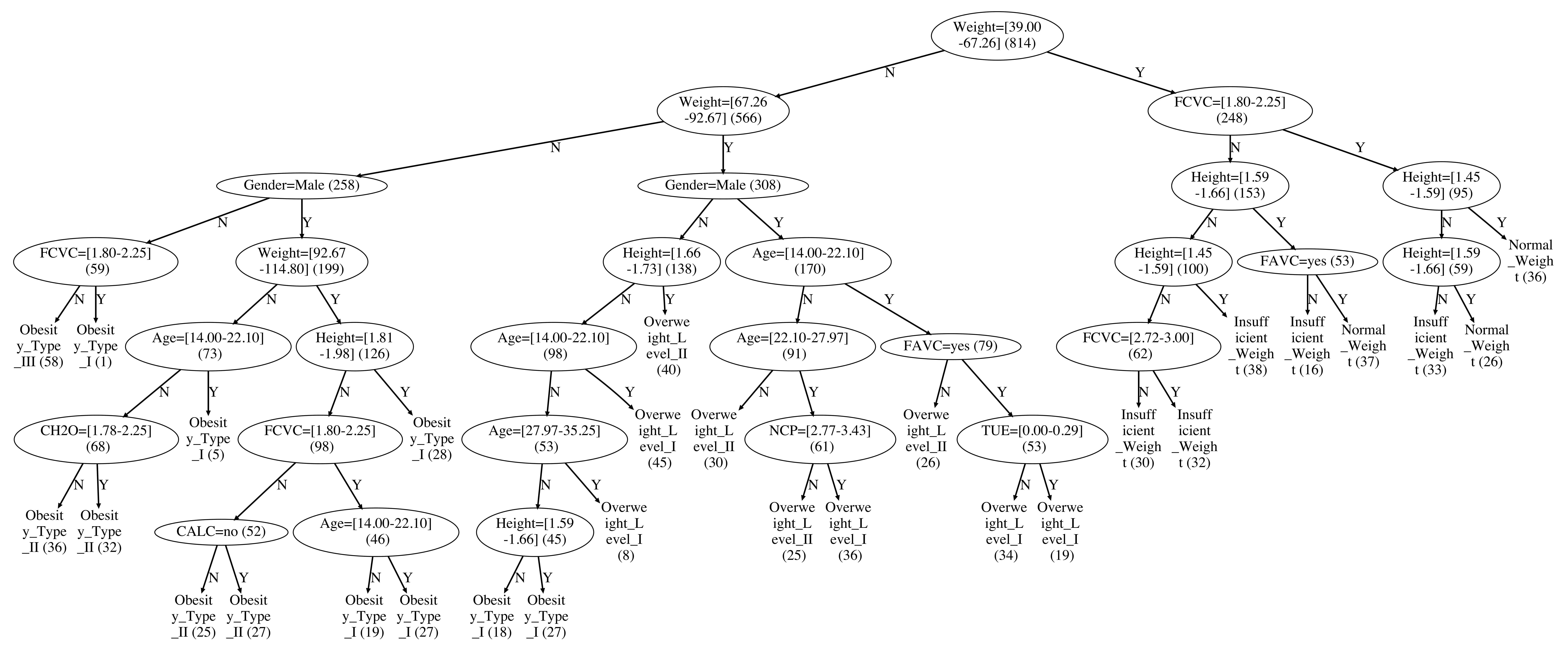}
    }
    \caption{\label{fig:viz:obesity} Visualization of \ecfourfive decision tree for obesity dataset.
    }
\end{figure}
\begin{figure}[H]
	\ContinuedFloat 
    \centering
    \subcaptionbox{\asr (AUC ROC 0.822 and expected height 4.9)}{
    	\picinput[width =.99\textwidth]{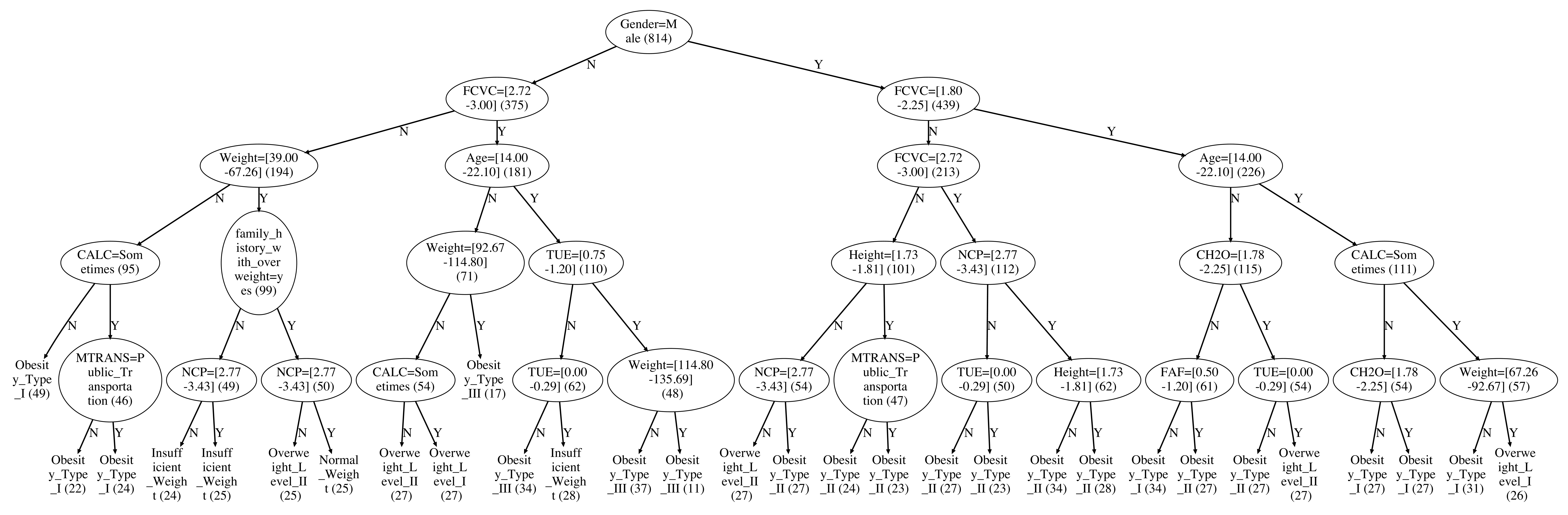}
    }
    \caption{\label{fig:viz:obesity} Visualization of \asr decision tree for obesity dataset.
    }
\end{figure}

\subsection{Visualization of decision trees for spambase dataset}
\begin{figure}[H]
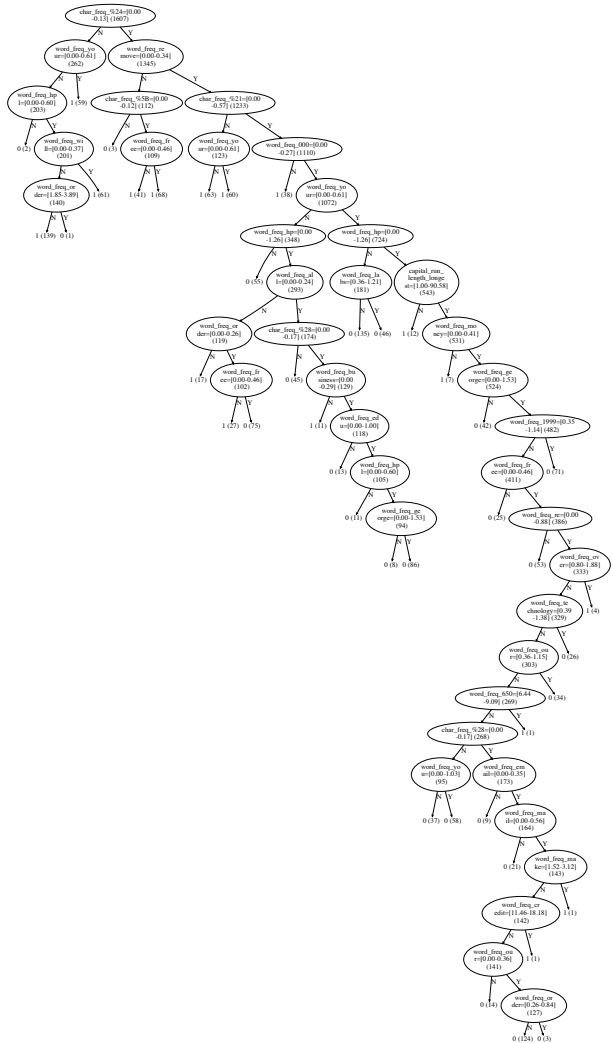

    \centering
    \subcaptionbox{\cfourfive (AUC ROC 0.932 and expected height 9.9)}{
    	\picinput[width = .68\textwidth, height=.85\textheight, keepaspectratio]{viz_spambase_C45}
    }
    \caption{\label{fig:viz:spambase} Visualization of \cfourfive decision tree for spambase dataset.
    }
\end{figure}
\begin{figure}[H]
	\ContinuedFloat 
    \centering
    \subcaptionbox{\ecfourfive (AUC ROC 0.928 and expected height 7.76)}{
    	\picinput[width =.99\textwidth]{viz_spambase_ours}
    }
    \caption{\label{fig:viz:spambase} Visualization of \ecfourfive decision tree spambase dataset.
    }
\end{figure}
\begin{figure}[H]
	\ContinuedFloat 
    \centering
    \subcaptionbox{\asr (AUC ROC 0.839 and expected height 5.8)}{
    	\picinput[width =.99\textwidth]{viz_spambase_asr}
    }
    \caption{\label{fig:viz:spambase} Visualization of \asr decision tree for spambase dataset.
    }
\end{figure}

\subsection{Visualization of decision trees for speed-dating dataset}
\begin{figure}[H]
    \centering
    \subcaptionbox{\cfourfive (AUC ROC 0.779 and expected height 5.9)}{
    	\picinput[width =.99\textwidth]{viz_speed-dating_C45}
    }
    \caption{\label{fig:viz:speed-dating} Visualization of \cfourfive decision tree for speed-dating dataset.
    }
\end{figure}
\begin{figure}[H]
	\ContinuedFloat 
    \centering
    \subcaptionbox{\ecfourfive (AUC ROC 0.790 and expected height 4.9)}{
    	\picinput[width =.99\textwidth]{viz_speed-dating_ours}
    }
    \caption{\label{fig:viz:speed-dating} Visualization of \ecfourfive decision tree for speed-dating dataset.
    }
\end{figure}
\begin{figure}[H]
	\ContinuedFloat 
    \centering
    \subcaptionbox{\asr (AUC ROC 0.569 and expected height 5)}{
    	\picinput[width =.99\textwidth]{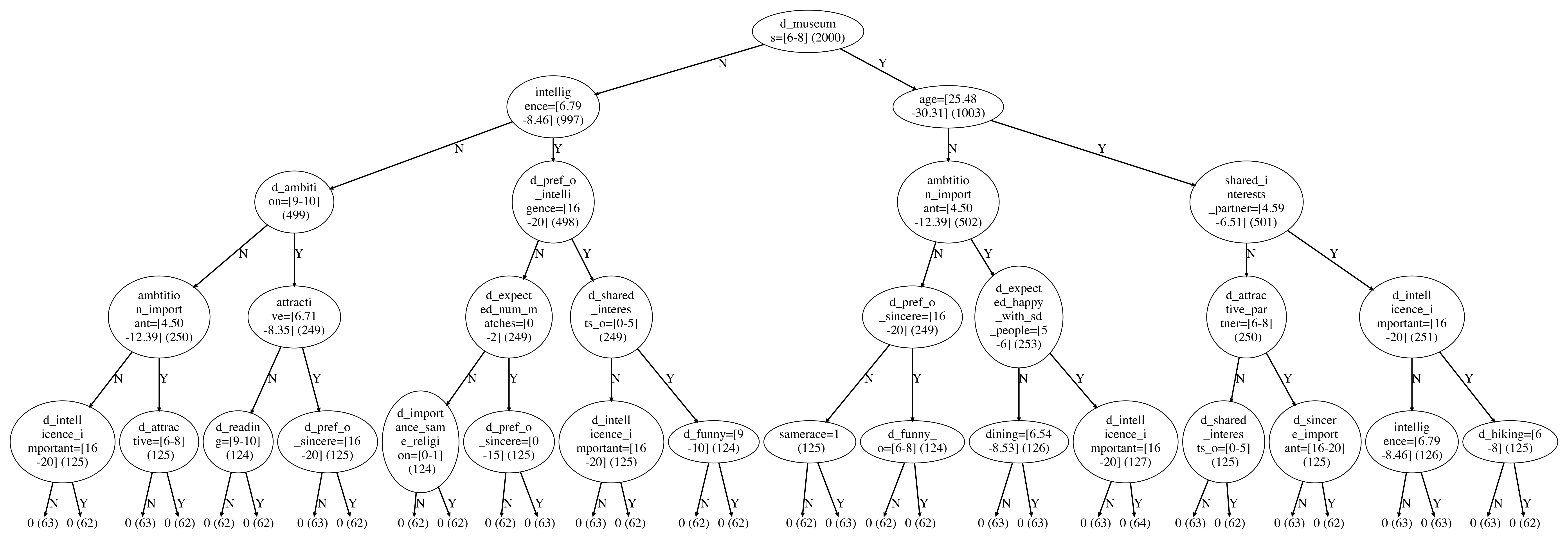}
    }
    \caption{\label{fig:viz:speed-dating} Visualization of \asr decision tree for speed-dating dataset.
    }
\end{figure}

%




\end{appendices}


\bibliography{references}

\begin{thebibliography}{33}
\providecommand{\natexlab}[1]{#1}
\providecommand{\url}[1]{{#1}}
\providecommand{\urlprefix}{URL }
\providecommand{\doi}[1]{\url{https://doi.org/#1}}
\providecommand{\eprint}[2][]{\url{#2}}
 \bibcommenthead

\bibitem[{Adler and Heeringa(2008)}]{adler2008approximating}
Adler M, Heeringa B (2008) Approximating optimal binary decision trees. In:
  Approximation, Randomization and Combinatorial Optimization. Algorithms and
  Techniques. Springer, p 1--9

\bibitem[{Bellala et~al(2012)Bellala, Bhavnani, and Scott}]{bellala2012group}
Bellala G, Bhavnani SK, Scott C (2012) Group-based active query selection for
  rapid diagnosis in time-critical situations. IEEE Transactions on Information
  Theory 58(1):459--478

\bibitem[{Blanc et~al(2020)Blanc, Lange, and Tan}]{blanc2020top}
Blanc G, Lange J, Tan LY (2020) Top-down induction of decision trees:
  {R}igorous guarantees and inherent limitations. In: 11th Innovations in
  Theoretical Computer Science Conference (ITCS 2020), Schloss
  Dagstuhl-Leibniz-Zentrum f{\"u}r Informatik

\bibitem[{Breiman et~al(1984)Breiman, Friedman, Stone, and
  Olshen}]{breiman1984classification}
Breiman L, Friedman J, Stone CJ, et~al (1984) Classification and regression
  trees. CRC press

\bibitem[{{Brutzkus} et~al(2019){Brutzkus}, {Daniely}, and
  {Malach}}]{brutzkus2019on}
{Brutzkus} A, {Daniely} A, {Malach} E (2019) On the optimality of trees
  generated by {ID3}. arXiv preprint arXiv:190705444

\bibitem[{Chakaravarthy et~al(2007)Chakaravarthy, Pandit, Roy, Awasthi, and
  Mohania}]{chakaravarthy2007decision}
Chakaravarthy VT, Pandit V, Roy S, et~al (2007) Decision trees for entity
  identification: {A}pproximation algorithms and hardness results. In:
  Proceedings of the twenty-sixth ACM SIGMOD-SIGACT-SIGART symposium on
  Principles of database systems, pp 53--62

\bibitem[{Cicalese et~al(2014)Cicalese, Laber, and
  Saettler}]{cicalese2014diagnosis}
Cicalese F, Laber E, Saettler AM (2014) Diagnosis determination: decision trees
  optimizing simultaneously worst and expected testing cost. In: International
  Conference on Machine Learning, pp 414--422

\bibitem[{Dasgupta(2005)}]{dasgupta2005analysis}
Dasgupta S (2005) Analysis of a greedy active learning strategy. In: Advances
  in neural information processing systems, pp 337--344

\bibitem[{Dem{\v{s}}ar(2006)}]{demvsar2006statistical}
Dem{\v{s}}ar J (2006) Statistical comparisons of classifiers over multiple data
  sets. The Journal of Machine Learning Research 7:1--30

\bibitem[{Deshpande et~al(2016)Deshpande, Hellerstein, and
  Kletenik}]{deshpande2016approximation}
Deshpande A, Hellerstein L, Kletenik D (2016) Approximation algorithms for
  stochastic boolean function evaluation and stochastic submodular set cover
  with applications to boolean function evaluation and min-knapsack. ACM
  Transactions on Algorithms 12(42):1--42

\bibitem[{Doshi-Velez and Kim(2017)}]{doshi2017towards}
Doshi-Velez F, Kim B (2017) Towards a rigorous science of interpretable machine
  learning. arXiv preprint arXiv:170208608

\bibitem[{Dua and Graff(2017)}]{UCI}
Dua D, Graff C (2017) {UCI} machine learning repository.
  \urlprefix\url{http://archive.ics.uci.edu/ml}

\bibitem[{Feige(1998)}]{feige1998threshold}
Feige U (1998) A threshold of ln n for approximating set cover. Journal of the
  ACM (JACM) 45(4):634--652

\bibitem[{Feige et~al(2004)Feige, Lov{\'a}sz, and
  Tetali}]{feige2004approximating}
Feige U, Lov{\'a}sz L, Tetali P (2004) Approximating min sum set cover.
  Algorithmica 40(4):219--234

\bibitem[{Freitas(2014)}]{freitas2014comprehensible}
Freitas AA (2014) Comprehensible classification models: a position paper. ACM
  SIGKDD explorations newsletter 15(1):1--10

\bibitem[{Garey(1972)}]{garey1972optimal}
Garey MR (1972) Optimal binary identification procedures. SIAM Journal on
  Applied Mathematics 23(2):173--186

\bibitem[{Golovin and Krause(2011)}]{golovin2011adaptive}
Golovin D, Krause A (2011) Adaptive submodularity: Theory and applications in
  active learning and stochastic optimization. Journal of Artificial
  Intelligence Research 42:427--486

\bibitem[{Golovin et~al(2010)Golovin, Krause, and Ray}]{golovin2010near}
Golovin D, Krause A, Ray D (2010) Near-optimal {B}ayesian active learning with
  noisy observations. In: Advances in Neural Information Processing Systems, pp
  766--774

\bibitem[{Grammel et~al(2016)Grammel, Hellerstein, Kletenik, and
  Lin}]{grammel2016scenario}
Grammel N, Hellerstein L, Kletenik D, et~al (2016) Scenario submodular cover.
  In: International Workshop on Approximation and Online Algorithms, Springer,
  pp 116--128

\bibitem[{Guillory and Bilmes(2009)}]{guillory2009average}
Guillory A, Bilmes J (2009) Average-case active learning with costs. In:
  International conference on algorithmic learning theory, Springer, pp
  141--155

\bibitem[{Guillory and Bilmes(2011)}]{guillory2011simultaneous}
Guillory A, Bilmes JA (2011) Simultaneous learning and covering with
  adversarial noise. In: International Conference on Machine Learning

\bibitem[{Gupta et~al(2017)Gupta, Nagarajan, and Ravi}]{gupta2017approximation}
Gupta A, Nagarajan V, Ravi R (2017) Approximation algorithms for optimal
  decision trees and adaptive {TSP} problems. Mathematics of Operations
  Research 42(3):876--896

\bibitem[{Hancock et~al(1996)Hancock, Jiang, Li, and Tromp}]{hancock1996lower}
Hancock T, Jiang T, Li M, et~al (1996) Lower bounds on learning decision lists
  and trees. Information and Computation 126(2):114--122

\bibitem[{Im et~al(2012)Im, Nagarajan, and van~der Zwaan}]{im2012minimum}
Im S, Nagarajan V, van~der Zwaan R (2012) Minimum latency submodular cover. In:
  International Colloquium on Automata, Languages, and Programming, Springer,
  pp 485--497

\bibitem[{{Kearns} and {Mansour}(1999)}]{kearns1999on}
{Kearns} M, {Mansour} Y (1999) On the boosting ability of top-down decision
  tree learning algorithms. Journal of Computer and System Sciences
  58(1):109--128

\bibitem[{Kosaraju et~al(1999)Kosaraju, Przytycka, and
  Borgstrom}]{kosaraju1999optimal}
Kosaraju SR, Przytycka TM, Borgstrom R (1999) On an optimal split tree problem.
  In: Workshop on Algorithms and Data Structures, Springer, pp 157--168

\bibitem[{Laurent and Rivest(1976)}]{laurent1976constructing}
Laurent H, Rivest RL (1976) Constructing optimal binary decision trees is
  np-complete. Information processing letters 5(1):15--17

\bibitem[{Lipton(2018)}]{lipton2018mythos}
Lipton ZC (2018) The mythos of model interpretability: {In} machine learning,
  the concept of interpretability is both important and slippery. Queue
  16(3):31--57

\bibitem[{{Murthy}(1998)}]{murthy1998automatic}
{Murthy} SK (1998) Automatic construction of decision trees from data: {A}
  multi-disciplinary survey. Data Mining and Knowledge Discovery 2(4):345--389

\bibitem[{Navidi et~al(2020)Navidi, Kambadur, and
  Nagarajan}]{navidi2020adaptive}
Navidi F, Kambadur P, Nagarajan V (2020) Adaptive submodular ranking and
  routing. Operations Research 68(3):856--877

\bibitem[{Quinlan(1993)}]{quinlan1993c4.5}
Quinlan JR (1993) {C4.5}: {P}rograms for machine learning. Morgan Kaufmann
  Publishers

\bibitem[{Strobl et~al(2007)Strobl, Boulesteix, Zeileis, and
  Hothorn}]{strobl2007bias}
Strobl C, Boulesteix AL, Zeileis A, et~al (2007) Bias in random forest variable
  importance measures: Illustrations, sources and a solution. BMC
  bioinformatics 8(1):1--21

\bibitem[{Vanschoren et~al(2013)Vanschoren, van Rijn, Bischl, and
  Torgo}]{OpenML2013}
Vanschoren J, van Rijn JN, Bischl B, et~al (2013) {OpenML}: {N}etworked science
  in machine learning. SIGKDD Explorations 15(2):49--60.
  \urlprefix\url{http://doi.acm.org/10.1145/2641190.2641198}

\end{thebibliography}


\end{document}